\documentclass[10pt]{article}

\usepackage{parskip}
\usepackage{graphicx} 
\usepackage{amsmath,amssymb,amsthm,mathtools}
\usepackage{hyperref}
\usepackage{url}
\usepackage{fullpage}
\usepackage{enumitem}
\usepackage{algorithm} 
\usepackage{algorithmic} 
\usepackage[T1]{fontenc}
\usepackage{graphicx}
\usepackage{booktabs}
\usepackage{multirow}
\usepackage{makecell}
\usepackage{natbib}
\usepackage{bigstrut}
\usepackage{subcaption}
\usepackage{xcolor}
\usepackage{bm}
\usepackage{microtype}

\newtheorem{theorem}{Theorem}[section]
\newtheorem{proposition}[theorem]{Proposition}

\DeclareMathAlphabet\mathbfcal{OMS}{cmsy}{b}{n}

\title{Beyond the Matrix Sign: Quadratic Spectral Descent}
\author{Qiaozhe Zhang, Jun Sun, Yingzhuang Liu\\
School of Electronic Information and Communications\\
Huazhong University of Science and Technology\\
\texttt{\{qiaozhezhang, juns, liuyz\}@hust.edu.cn}}
\date{September 2026}

\begin{document}

\maketitle

\begin{abstract}
Muon emerges as a strong competitor of the AdamW for LLM pretraining, because the matrix-wise update it employs can potentially incur smaller second-order penalty than the once dominating AdamW, which performs coordinate-wise update. However, the spectral flattening procedure in Muon is quite debatable since it discards the spectral amplitude information totally. To seek  for better spectral  allocation (and the associated spectral subspace), we propose to solve the quadratic model of loss function under  the spectral norm constraint \textit{directly} (i.e., in a genuinely Newtonian way) and thus obtaining the Quadratic Spectral Descent (QSD) algorithm. In contrast, many existing curvature-aware methods either exploit the second-order information in an \textit{implicit} way by changing the weight update geometry (such as Mousse, FISMO) or rely on strong assumptions (such as the weight displacement isotropy assumption in Newton-Muon). QSD's potential advantage over these methods is best illustrated in the isotropic curvature scenario, where Mousse, FISMO and Newton-Muon all reduce to Muon while the spectral allocation in QSD is still  \textit{non-flat} (since the spectral allocation in QSD depends on the \textit{gradient to curvature ratio}). Meanwhile, to control the complexity of QSD, we employ inversion-free K-FAC and \textit{online} Frank-Wolfe update which is essentially a matrix sign operator. Overall, the complexity increase can be rather mild. Experiments on GPT pre-training show that QSD consistently improves validation loss over Muon and recent Muon variants, while achieving up to an $8.49\%$ wall-clock speedup at matched validation loss.
\end{abstract}

\vspace{-0.9em}
\section{Introduction}
\vspace{-0.5em}

AdamW \citep{loshchilov2017decoupled} is the de-facto standard optimizer for large-scale neural networks. 
However, its role has  been  challenged recently by Muon \citep{jordan2024muon,liu2025muon,shah2025practical,wen2026fantastic}, which distinguishes itself as a matrix-valued optimizer with spectral norm constraint, as opposed to the conventional vector-valued optimizers constrained by the Euclidean norm. By employing the matrix-wise update (specifically, matrix orthogonalization and spectral flattening), Muon has the potential advantage over AdamW which uses coordinate-wise update in that: Muon can incur less second-order penalty than AdamW while the first-order gain of them two are similar, as reported in \cite{wang2026muon}. Thus Muon can achieve larger loss decrease per step compared with AdamW overall. 

Muon can be proved to be the best optimizer for the \textit{linear} model under the spectral norm constraint. In specific, in an idealized form, Muon computes \citep{pethick2025training,bernstein2024old}
\begin{equation}
D_{\mathrm{G}}
=
-\rho\,\operatorname{msgn}(G)
=
-\rho\,UV^\top
=
\arg\min_{\|D\|_2\leq\rho}
\langle G,D\rangle,
\label{eq:intro_muon}
\end{equation}
where $G = U\Sigma V^\top$. Meanwhile, Muon indeed demonstrates superior performance than AdamW  in practical LLM pretraining \citep{liu2025muon}.

Despite of the above, one of the key operation of Muon, i.e. spectral flattening, is still quite debatable even questionable, since by that operation all the spectrum amplitudes information is discarded. A natural question is thus: \textit{can we find a more efficient way for spectral allocation (as well as the associated spectral subspace update), if we are provided some kind of second-order information?}

To answer this question, several curvature-aware Muon variants have been proposed, including Newton-Muon \citep{du2026newton}, Mousse  \citep{zhang2026mousse}, FISMO  \citep{xu2026fismo} etc. Newton-Muon  \citep{du2026newton} is based on an important yet strong assumption, i.e., the weight displacement isotropy, thus it can remove the effect of the output curvature and finally obtain a right-preconditioned matrix sign solver. Mousse \citep{zhang2026mousse}  employs the Shampoo-style second-order information \citep{gupta2018shampoo,morwani2025new}(namely gradient covariance) to whiten  the weight update. FISMO \citep{xu2026fismo} instead employs the K-FAC-based \citep{martens2015optimizing}  Fisher information matrix (FIM) to modify the local geometry of the weight update (i.e., natural gradient trust region) . All these second-order Muon variants are reported to outperform the vanilla Muon. However, it can be seen that all of them either rely on strong assumption (Newton-Muon) or exploit the second-order information in an \textit{implicit} way (Mousse, FISMO) by changing the local weight geometry  and thus resulting in a preconditioned matrix sign solution. When the curvature is isotropic, unfortunately, it turns out that all the above Muon variants  will reduce to Muon, i.e., the spectrum will still be flat, thus at the risk of performance degradation \citep{su2025isotropic}.

To fully tackle the above question, in this paper we will take a different yet more direct route. Specifically, we will solve the quadratic model under the spectral-norm constraint  \textit{directly} in the \textit{general} form (i.e., without any assumptions), thus obtaining the Quadratic Spectral Descent (QSD) algorithm. Concretely, QSD aims to solve:
\begin{equation}
\min_{\|D\|_2\leq\rho}
\left\{
\langle G,D\rangle
+
\frac{\eta}{2}
\operatorname{vec}(D)^\top C\,\operatorname{vec}(D)
\right\},
\label{eq:intro_qsd}
\end{equation}
where $C\succeq 0$ models the local curvature. 
It is worth noting that for this quadratic objective optimization, the solution is $s_i^\star = \min
\left\{
\rho,
\frac{g_i}{\eta\kappa}
\right\}$ in the isotropic-curvature and aligned-spectrum (i.e., gradient and curvature) case, where $s_i^\star$ and $g_i$ denote the spectral allocation and  gradient strength of the $i$-th mode, respectively, and $\kappa$ is the isotropic curvature, implying non-flat spectral allocation generally. This is in stark contrast to the above-discussed second-order Muon variants. The differences among them are summarized in
Table~\ref{tab:optimizer_comparison}.

Furthermore, to make the quadratic spectral problem practical at modern training scale, QSD employs K-FAC \citep{martens2015optimizing,george2018fast} only to obtain structured curvature with no matrix inversion involved. Then  we solve the constrained quadratic by \textit{online} Frank-Wolfe updates \citep{frank1956algorithm,jaggi2013revisiting}. The total complexity  increase of QSD relative to Muon turns out to be rather mild due to two important reasons: 1) Online optimization: we employ the warm-start over each outer iteration, thus offering a high-quality initial value for the inner Frank-Wolfe updates; 2) Closed-form Frank-Wolfe solution: each Frank-Wolfe
step  has a closed-form solution, namely the matrix sign operator as in Muon.
Together, they make it possible that 3 Frank-Wolfe steps for each outer iteration are sufficient, thus substantially lowering the total complexity increase.


Overall, our analysis shows that the quadratic model changes the  solution of linear model for Muon significantly, in terms of both  spectral amplitude and spectral directions. Specifically, when the gradient and curvature are aligned in terms of  singular directions, the spectral amplitudes should be proportional to the\textit{ gradient-to-curvature ratio}, rather than being uniform (as in Muon).
Moreover, if they are not aligned, the optimal singular directions also need rotation relative to those of Muon. Experiments on language-model pretraining show that QSD achieves these curvature-aware updates with mild  computation overhead and consistent performance improvement over Muon.

\begin{table*}[t]
\centering
\caption{Comparison of spectral matrix optimizers. Here, $\operatorname{Polar}(X)=UV^\top$ for the singular value decomposition $X=U\Sigma V^\top$.}
\vspace{-0.9em}
\label{tab:optimizer_comparison}

\setlength{\tabcolsep}{5pt}
\renewcommand{\arraystretch}{1.25}

\resizebox{0.98\textwidth}{!}{
\begin{tabular}{lccc}
\toprule
\textbf{Method} & \textbf{Optimization Objective} & \textbf{Role of Curvature} & \textbf{Update form} \\
\midrule

Muon & $\displaystyle \min_{\|D\|_2\le\rho}\langle G,D\rangle $ & None & $\displaystyle D=-\rho\,\operatorname{Polar}(G) $
\\[0.6em]

Newton--Muon
& $\displaystyle \min_Q -\langle G,Q\rangle +\frac{1}{2}\operatorname{tr}(HQAQ^\top) $ 
& Partial ($A$ factor only $\rightarrow$ right preconditioning) & $\displaystyle D\propto-\operatorname{Polar}(GA^{-1})$ \\[0.6em]

Mousse
& $\displaystyle \min_D\langle G,D\rangle \quad \mathrm{s.t.}\quad \|L^{1/4}DR^{1/4}\|_2\le\rho $
& Implicit (Shampoo-style preconditioning)
& $\displaystyle D\propto -L^{-1/4} \operatorname{Polar}(L^{-1/4}GR^{-1/4}) R^{-1/4} $ \\[0.6em]

FISMO
& $\displaystyle \min_D\langle G,D\rangle \quad \mathrm{s.t.}\quad \|P^{1/2}DQ^{1/2}\|_2\le\rho $
& Implicit (Fisher preconditioning)
& $\displaystyle D= -\rho P^{-1/2} \operatorname{Polar}(P^{-1/2}GQ^{-1/2}) Q^{-1/2} $ \\[0.6em]

QSD & 
$\displaystyle \min_{\|D\|_2\le\rho} \left\{ \langle G,D\rangle + \frac{\eta}{2}\operatorname{vec}(D)^\top C\,\operatorname{vec}(D) \right\} $
& Explicit (in the objective) & Frank-Wolfe Iterates
\\

\bottomrule
\end{tabular}
}
\end{table*}

Our contributions are summarized as follows:
\begin{itemize}
\vspace{-0.5em}

\item \textbf{Quadratic spectral optimization.}
We formulate QSD by directly optimizing a quadratic local model under the same spectral-norm constraint as in Muon. We theoretically and empirically show that curvature generally induces a nonuniform spectral allocation and can also rotate the preferred singular subspace. Notably, even under isotropic curvature, QSD remains non-flat while several curvature-aware Muon variants reduce to Muon.

\vspace{-0.2em}

\item \textbf{Scalable curvature-aware solver.}
We develop an inversion-free implementation based on Kronecker-factored curvature and a few online Frank-Wolfe steps. Each inner linear oracle has the same matrix-sign form as Muon, while momentum and curvature smoothing enable effective warm starts.

\vspace{-0.2em}

\item \textbf{Optimization guarantees and empirical gains.}
We derive a computable Frank-Wolfe optimality certificate, show that curvature-aware refinement can improve upon the Muon update under the same quadratic surrogate, and establish an $O(1/K)$ convergence rate for the inner solver. On GPT pre-training, QSD consistently improves validation loss over Muon and recent Muon variants, and reduces wall-clock training time by up to $8.49\%$ at matched validation loss.

\end{itemize}

\vspace{-0.9em}
\subsection{Related Work}
\vspace{-0.5em}
\paragraph{Muon and spectral-norm optimization.} Muon can be interpreted as normalized steepest descent under the spectral norm, or equivalently as linear optimization over a spectral-norm ball \citep{bernstein2024old,crawshaw2025exploration,riabinin2025gluon,pethick2025training}. This viewpoint explains the matrix-sign update as the exact solution of a linear minimization oracle over the spectral-norm ball and connects Muon to more general Linear Minimization Oracle (LMO)-based matrix optimization methods such as Scion \citep{pethick2025training}. Related analyses have studied the geometry, convergence, and implicit properties of Muon-style updates. Our work starts from the same spectral-norm geometry, but asks what update is preferred when the local objective is extended from linear to quadratic.

\vspace{-0.8em}
\paragraph{Curvature-aware and matrix optimization methods.}
Recent work has incorporated curvature or preconditioning into Muon-style optimization. Newton-Muon, FISMO, Mousse, and GO-MUON use activation, Fisher, Shampoo, or K-FAC-style statistics to transform the gradient or define an anisotropic spectral geometry before applying a spectral or polar update \citep{du2026newton,xu2026fismo,zhang2026mousse}. A complementary line directly modifies the singular spectrum of Muon updates through adaptive or nonuniform spectral transformations \citep{dong2026muon,wu2026spectral,wu2026dynmuon,pang2026htmuon}. These methods are related to structured preconditioners such as K-FAC, Shampoo, and SOAP \citep{martens2015optimizing,george2018fast,gupta2018shampoo,morwani2025new,vyas2025soap}, which reduce the cost of full-matrix curvature but still rely on matrix inversions, inverse roots, or eigendecompositions to construct the preconditioner. QSD takes a different route: it keeps the spectral-norm constraint fixed and uses curvature directly in the quadratic objective. It therefore avoids curvature inversion while jointly optimizing the singular directions and singular values of the finite update.

\section{Preliminaries and Problem Formulation}
\label{sec:preliminaries}

We consider the optimization of a matrix-valued parameter $ W \in\mathbb{R}^{m\times n}$ with objective $f(W)$. Let $G=\nabla_W f(W)$ denote the gradient with respect to $W$. We write a single parameter update as $W^{+}=W+\eta D$, where $D\in\mathbb{R}^{m\times n}$ is the update direction and $\eta>0$ is the learning rate. Throughout the paper, $\|\cdot\|_2$ denotes the spectral norm, $\|\cdot\|_*$ the nuclear norm, $\|\cdot\|_F$ the Frobenius norm, and $ \langle X,Y\rangle = \operatorname{tr}(X^\top Y)$ the Frobenius inner product. Building on the constrained interpretation of Muon in Eq.~\ref{eq:intro_muon}, we extend the same spectral-norm geometry from a linear to a quadratic local model as follows:




For a finite update, directions with similar first-order descent can incur substantially different second-order costs. Let $H_W=\nabla^2_{\operatorname{vec}(W)}f(W)$ denote the Hessian and define $\mathcal H[D]=\operatorname{unvec}\left(H_W\operatorname{vec}(D)\right)$. A second-order Taylor expansion gives
\vspace{-0.8em}
\begin{equation}
f(W+\eta D)=f(W)+\eta\langle G,D\rangle+ \frac{\eta^2}{2} \langle D,\mathcal H[D]\rangle + \mathcal R_3(D).
\label{eq:one_step_taylor}
\end{equation}
\vspace{-1.5em}

As the Hessian is expensive to apply, we therefore use a positive-semidefinite curvature operator $\mathcal C$ instead. After removing the constant term and dividing by $\eta$, we obtain
\vspace{-0.5em}
\begin{equation}
q_\eta(D) = \langle G,D\rangle + \frac{\eta}{2} \langle D,\mathcal C[D]\rangle.
\label{eq:curvature_surrogate}
\end{equation}
\vspace{-1.5em}

We define Quadratic Spectral Descent (QSD) as
\begin{equation}
\boxed{
D^\star
\in
\arg\min_{\|D\|_2\leq\rho}
q_\eta(D).
}
\label{eq:curvature_aware_problem}
\end{equation}
When $\mathcal C=0$, QSD reduces exactly to the Muon problem.

\section{Practical Quadratic Spectral Descent}
\label{sec:kfac_fw}

The QSD problem is only useful if its curvature term can be estimated and the constrained quadratic can be solved efficiently. We address these two issues with K-FAC and Frank--Wolfe, respectively.

\vspace{-0.3em}
\subsection{Kronecker-Factored Curvature}
\label{sec:kfac_curvature}
\vspace{-0.3em}

Consider a linear layer $h=Wa,$ where $W\in\mathbb{R}^{m\times n}$ and $a\in\mathbb{R}^{n}$. Let $\delta=\nabla_h\ell\in\mathbb{R}^{m}.$ For a single example, the gradient with respect to $W$ has the outer-product form $\nabla_W\ell=\delta a^\top.$ The corresponding empirical-Fisher block is $C_{\mathrm{EF}}=\mathbb E\left[(aa^\top)\otimes(\delta\delta^\top)\right].$ We use the empirical-Fisher factors as an inexpensive positive-semidefinite structured curvature proxy, without assuming that the empirical Fisher is an exact approximation to the GGN or Hessian \citep{schraudolph2002fast,kunstner2019limitations}. Its overall scale is calibrated against the GGN directional curvature in Section~\ref{sec:practical_qsd}. K-FAC approximates this block by factorizing the expectation \citep{martens2015optimizing,george2018fast}:
\vspace{-0.3em}
\begin{equation}
\widehat C_{\mathrm{KFAC}}= A\otimes B,
\qquad
A=\mathbb E[aa^\top],
\qquad
B=\mathbb E[\delta\delta^\top].
\label{eq:kfac_block}
\vspace{-0.3em}
\end{equation}
Here $A$ captures input-side second-order statistics and $B$ captures output-gradient statistics. The development below only requires $A\succeq0$ and $B\succeq0$, so analogous factorizations can also be used for other positive-semidefinite curvature models. Using $\operatorname{vec}(BDA)=(A^\top\otimes B)\operatorname{vec}(D),$ and the symmetry of $A$, the corresponding matrix-valued curvature action is $\widehat{\mathcal C}[D]=BDA.$

Importantly, K-FAC is used here to define a local quadratic \emph{objective}, rather than merely to construct a preconditioned gradient. We optimize this curvature-aware objective directly over the spectral-norm ball.

\subsection{Frank--Wolfe Solver}
\label{sec:fw_solver}

The QSD problem does not in general have a closed-form solution. We therefore use Frank-Wolfe \citep{frank1956algorithm,jaggi2013revisiting} to solve it approximately. At each step, Frank-Wolfe linearizes the quadratic objective at the current iterate, solves the resulting linear problem over the same spectral-norm ball, and moves toward its solution. Although the original problem is quadratic, each linear subproblem still has a closed-form matrix-sign solution.

Let $D_s$ denote the current inner iterate. The gradient of the quadratic surrogate with respect to $D_s$ is
\begin{equation}
R_s=\nabla_D \widehat q_\eta(D_s)=G+\eta BD_sA.
\label{eq:fw_residual}
\end{equation}
The second term corrects the original gradient $G$ according to the curvature at the current iterate. Frank-Wolfe then solves
\begin{equation}
S_s\in\arg\min_{\|S\|_2\leq\rho}\langle R_s,S\rangle,
\qquad
S_s=-\rho\,\operatorname{msgn}(R_s).
\label{eq:fw_atom}
\end{equation}
The new candidate is obtained by moving toward this feasible atom:
\begin{equation}
D_{s+1}=(1-\gamma_s)D_s+\gamma_sS_s,
\qquad
\gamma_s\in[0,1].
\label{eq:fw_update}
\end{equation}

For the quadratic surrogate, $\gamma_s$ can be obtained by exact line
search. Letting $\Delta_s=S_s-D_s$, we have
\begin{equation}
\gamma_s=\operatorname{clip}_{[0,1]}
\left(
\frac{\langle R_s,D_s-S_s\rangle}{\eta\,\operatorname{tr}\left(\Delta_s^\top B\Delta_s A\right)
}\right),
\label{eq:fw_exact_line_search}
\end{equation}
With $D_0=0$, we have $R_0=G$, and hence the first Frank-Wolfe atom is exactly the Muon direction. Subsequent atoms use the curvature-corrected gradient $R_s$ and progressively refine this initial update.

\subsection{Practical QSD}
\label{sec:practical_qsd}

The derivation above assumes fixed curvature factors. In practice, we estimate the K-FAC factors from minibatch statistics and update them periodically during training.

\paragraph{Factor estimation.} At a factor-refresh iteration $t$, using a sample subset $\mathcal S_t$, we estimate for each layer $\ell$
\begin{equation}
\widehat A_{t,\ell}
=
\frac{1}{|\mathcal S_t|}
\sum_{i\in\mathcal S_t}
a_{i,\ell}a_{i,\ell}^{\top},
\qquad
\widehat B_{t,\ell}
=
\frac{1}{|\mathcal S_t|}
\sum_{i\in\mathcal S_t}
\delta_{i,\ell}\delta_{i,\ell}^{\top},
\label{eq:minibatch_kfac_factors}
\end{equation}
and maintain exponential moving averages
\begin{equation}
A_{t,\ell}
=
\beta_1 A_{t-j,\ell}
+
(1-\beta_1)\widehat A_{t,\ell},
\qquad
B_{t,\ell}
=
\beta_1 B_{t-j,\ell}
+
(1-\beta_1)\widehat B_{t,\ell},
\label{eq:kfac_factor_ema}
\end{equation}
where $j$ is the factor-refresh interval. Between refreshes, the most recent factors are reused. Details could be found in Appendix \ref{app:kfac_factor_estimation}.

\paragraph{Curvature calibration.} To mitigate the possible scale mismatch in the Kronecker approximation, we introduce a per-layer calibration coefficient $\alpha_{t,\ell}>0$ by matching the K-FAC and generalized Gauss--Newton (GGN) directional curvatures along the recent update direction. To reduce noise, the  ratio is smoothed by:
\begin{equation}
\alpha_{t,\ell}
=
\beta_{\alpha}\alpha_{t-k,\ell}
+
(1-\beta_{\alpha})
\operatorname{clip}
\left(
\widehat\alpha_{t,\ell},
\alpha_{\min},
\alpha_{\max}
\right).
\label{eq:calibration_ema}
\end{equation}
Full details are given in Appendix~\ref{app:curvature_calibration}.

Even after calibration, the quadratic surrogate can still differ from the practical finite-step training objective. We therefore apply a mild curvature inflation factor $\tau>1$ to make the update more conservative in directions where curvature already plays a significant role. The effect of this inflation is analyzed in Appendix~\ref{sec:curvature_inflation}. We use $\tau=1.5$ throughout our experiments.

Together with isotropic damping, the practical curvature operator is
\begin{equation}
\widehat{\mathcal C}_{t,\ell}[D]=\tau(\alpha_{t,\ell}B_{t,\ell} D A_{t,\ell}+dD),
\label{eq:practical_curvature_operator}
\end{equation}
where $d\geq0$ is the damping coefficient.

\paragraph{Practical Frank-Wolfe update.} For clarity, we suppress the layer index $\ell$ below. As in Muon, we use the momentum-smoothed matrix $M_t$ in place of the instantaneous gradient. The surrogate gradient at the $s$-th Frank-Wolfe iterate is therefore
\begin{equation}
R_{t,s}=M_t+\eta_t
\left(
\tau (\alpha_t B_tD_{t,s}A_t+dD_{t,s})
\right).
\label{eq:practical_fw_residual}
\end{equation}

Since consecutive inner problems change gradually, we warm-start the solver from the previous solution, $D_{t,0} = D_{t-1,K}$, with $D_{0,0}=0$. The warm start remains feasible and allows a small number of Frank--Wolfe steps to track the evolving QSD solution. Algorithm~\ref{alg:kfac_fw} summarizes the resulting practical update.

\begin{algorithm}[t]
\caption{Practical Quadratic Spectral Descent}
\label{alg:kfac_fw}
\begin{algorithmic}[1]

\REQUIRE Momentum $M_t$, curvature operator $\mathcal C_t$, spectral radius $\rho$, FW steps $K$

\STATE $D_{t,0}\leftarrow D_{t-1,K}$ \COMMENT{use $D_{0,0}=0$}

\FOR{$s=0,\ldots,K-1$}

\STATE $R_{t,s}\leftarrow M_t+\eta_t\,\mathcal C_t[D_{t,s}]$

\STATE $S_{t,s}\leftarrow-\rho\,\operatorname{msgn}(R_{t,s})$

\STATE $\Delta_{t,s}\leftarrow S_{t,s}-D_{t,s}$

\STATE $\gamma_{t,s}\leftarrow
\arg\min_{\gamma\in[0,1]}
q_t(D_{t,s}+\gamma\Delta_{t,s})$

\STATE $D_{t,s+1}\leftarrow D_{t,s}+\gamma_{t,s}\Delta_{t,s}$

\ENDFOR

\STATE $W_{t+1}\leftarrow W_t+\eta_tD_{t,K}$

\end{algorithmic}
\end{algorithm}















\vspace{-0.9em}

\subsection{Optimization guarantees.}
The Frank-Wolfe gap provides a computable certificate of suboptimality for the QSD inner problem. With exact line search, the surrogate objective decreases monotonically; moreover, under zero initialization, the first Frank-Wolfe step is no worse than the Muon update under the same quadratic surrogate. The inner solver converges to the optimal surrogate value at the standard $O(1/K)$ Frank-Wolfe rate. Formal statements and proofs are provided in Appendix~\ref{sec:theory}.

\vspace{-0.9em}
\subsection{Computational Complexity}
\label{sec:complexity}
\vspace{-0.5em}

Despite using curvature information and an iterative inner solver, QSD incurs only a 6\% wall-clock overhead over Muon in our 124M model experiments. This modest overhead is enabled by structured curvature, infrequent factor updates, and warm-started inner solves.

For $W\in\mathbb{R}^{m\times n}$, the full curvature acts on an $mn$-dimensional space, requiring $\mathcal{O}(m^2n^2)$ storage and application cost, and up to $\mathcal{O}(m^3n^3)$ for direct factorization or inversion. QSD instead uses the K-FAC structure $\mathcal{C}_t[D] = B_t D A_t$,  where $A_t\in\mathbb{R}^{n\times n}$ and $B_t\in\mathbb{R}^{m\times m}$. This reduces curvature storage to $\mathcal{O}(m^2+n^2)$ and each curvature application to $\mathcal{O}(m^2n+mn^2)$, without requiring curvature inversion.

The factors are refreshed only periodically and reused between optimizer steps. If they are updated every $j$ steps using $s$ samples, their amortized estimation cost is $\mathcal{O}\left(s(m^2+n^2)/j\right)$. The curvature-scale calibration is performed only every roughly $200$ or more optimizer steps, uses only a small fraction of the training data, and requires only additional forward evaluations without an extra backward pass. Its amortized contribution to the overall training cost is therefore small.

QSD solves the quadratic spectral subproblem using a small number $K$ of Frank-Wolfe steps, each requiring one structured curvature application and one matrix-sign operation. For approximately square matrices, $m\approx n\approx d$, this gives an additional optimizer cost of $\mathcal{O}(Kd^3)$. We further warm start with the previous solution, $D_{t,0}=D_{t-1,K}$, since consecutive subproblems change gradually as the momentum evolves and the curvature factors are updated only periodically. This allows only a few online Frank-Wolfe steps to suffice in practice.

For comparison, the main forward and backward computation of a $d\times d$ layer over $N$ token representations scales as $\mathcal{O}(Nd^2)$, whereas the QSD inner solve scales as $\mathcal{O}(Kd^3)$ and does not grow with the number of processed tokens. Complete wall-clock comparisons are reported in Section~\ref{sec:exp}.

\begin{table*}[t]
\centering
\caption{
Main computational costs for a matrix parameter $W\in\mathbb{R}^{m\times n}$. Here $s$ is the number of samples used to estimate curvature, $j$ is the curvature-factor update interval, and $K$ is the number of Frank--Wolfe steps. }
\vspace{-0.9em}
\label{tab:complexity}

\small
\setlength{\tabcolsep}{4pt}
\renewcommand{\arraystretch}{1.15}

\resizebox{0.8\textwidth}{!}{
\begin{tabular}{lccc}
\toprule
\textbf{Cost} & \textbf{Dense curvature} & \textbf{Muon} & \textbf{QSD} \\
\midrule

Curvature storage & $\mathcal{O}(m^2n^2)$ & -- & $\mathcal{O}(m^2+n^2)$ \\

Curvature estimation & $\mathcal{O}(s m^2n^2)$ & -- & $\displaystyle \mathcal{O}\!\left( s(m^2+n^2)/j \right)$ \\

Curvature application & $\mathcal{O}(m^2n^2)$ & -- & $\mathcal{O}(m^2n+mn^2)$ per FW step \\

Factorization / inversion & $\mathcal{O}(m^3n^3)$ & -- & Not required \\

Matrix-sign operation & -- & $1\times$ & $K\times$ \\

\bottomrule
\end{tabular}
}
\end{table*}
\vspace{-0.9em}

\section{Understanding How Curvature Changes Muon Updates}
\label{sec:beyond_muon}

We now characterize how the quadratic curvature term changes the structure of the optimal update relative to Muon. We isolate two effects. First, when the gradient and curvature share the same singular modes, curvature alone is sufficient to produce nonuniform singular amplitudes. Second, under anisotropic curvature, the optimal update can also change its singular directions, even when the spectral-norm constraint is active. These results show that Muon's flat spectrum and its gradient-aligned singular directions are imposed by the spectral-norm constraint and linear optimization.

\subsection{Curvature Induces Nonuniform Singular Amplitudes}
\label{sec:nonflat_optimum}

We first isolate the effect of curvature on singular amplitudes. Suppose the singular modes of the gradient are also eigenmodes of the curvature operator. Let
\vspace{-0.7em}
\begin{equation}
G= U\operatorname{diag}(g_1,\ldots,g_p)V^\top = \sum_{i=1}^{p}g_i u_i v_i^\top, \qquad g_i\geq0,
\label{eq:gradient_svd_aligned}
\vspace{-1em}
\end{equation}
and define $E_i=u_iv_i^\top$. Assume $\mathcal C[E_i]=\kappa_iE_i$ with $\kappa_i\geq0.$ Under this aligned setting, curvature does not change the singular directions, but it can change the preferred magnitude of each mode.

\begin{proposition}[Optimal spectrum under aligned curvature]
\label{prop:aligned_optimum}
Consider
\begin{equation}
\min_{\|D\|_2\leq\rho}
\left\{
\langle G,D\rangle
+
\frac{\eta}{2}
\langle D,\mathcal C[D]\rangle
\right\},
\label{eq:aligned_problem}
\end{equation}
under aligned curvature. Then there exists an optimal solution of the form
\begin{equation}
D^\star
=
-
U
\operatorname{diag}
\left(
s_1^\star,\ldots,s_p^\star
\right)
V^\top,
\label{eq:aligned_optimum}
\end{equation}
where
\begin{equation}
s_i^\star
=
\begin{cases}
\rho,
&
\kappa_i=0,
\\[2mm]
\displaystyle
\min
\left\{
\rho,
\frac{g_i}{\eta\kappa_i}
\right\},
&
\kappa_i>0.
\end{cases}
\label{eq:optimal_mode_scale}
\end{equation}
\vspace{-1em}
\end{proposition}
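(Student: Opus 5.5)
The plan is to reduce the matrix problem to $p$ decoupled scalar problems, one per singular mode. Two facts make this possible: the off-mode part of any feasible $D$ can be discarded without increasing the objective or leaving the ball, and on the span of the modes the objective is separable.

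First I would fix conventions. Take $p=\min(m,n)$, allowing some $g_i=0$, and complete $U,V$ to orthonormal bases if needed, so that $\{E_i=u_iv_i^\top\}_{i\le p}$ is an orthonormal family in the Frobenius inner product. Let $\mathcal S=\operatorname{span}\{E_1,\dots,E_p\}$ and write $P$ for the orthogonal projection onto $\mathcal S$. Concretely, $P(D)=\sum_i (u_i^\top D v_i)E_i$, the "diagonal part" of $D$ in the $(U,V)$ basis.

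Next I would show that discarding the off-mode part only helps. Since $\mathcal C$ is self-adjoint and PSD, and each $E_i$ is an eigenvector, $\mathcal S$ is $\mathcal C$-invariant, and so is $\mathcal S^\perp$. Write $D=P(D)+D^\perp$. The cross term $\langle P(D),\mathcal C[D^\perp]\rangle$ vanishes, and $\langle G,D^\perp\rangle=0$ because $G\in\mathcal S$. Hence
\[
q_\eta(D)=q_\eta(P(D))+\tfrac{\eta}{2}\langle D^\perp,\mathcal C[D^\perp]\rangle\;\ge\; q_\eta(P(D)).
\]
Feasibility is preserved by a pinching argument. $P(D)=U\operatorname{diag}(d_i)V^\top$ with $d_i=u_i^\top D v_i$, so its spectral norm is $\max_i|d_i|$. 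Each $|d_i|\le\|u_i\|\,\|D\|_2\,\|v_i\|=\|D\|_2\le\rho$. Therefore it suffices to minimize over diagonal $D=\sum_i d_iE_i$ with $|d_i|\le\rho$. This step, which uses invariance of $\mathcal S$ and the non-expansiveness of diagonal extraction, is the only place needing care; the rest is routine.

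On $\mathcal S$, orthonormality gives
\[
q_\eta(D)=\sum_i\Big(g_id_i+\tfrac{\eta\kappa_i}{2}d_i^2\Big),
\]
which splits into independent box-constrained scalar problems on $[-\rho,\rho]$. If $\kappa_i=0$, the objective is linear with $g_i\ge0$, so $d_i=-\rho$ is optimal (for $g_i=0$ any value is optimal, including $-\rho$). If $\kappa_i>0$, the objective is a convex parabola with unconstrained minimizer $-g_i/(\eta\kappa_i)\le0$. Clipping it to $[-\rho,\rho]$ gives $d_i=-\min\{\rho,g_i/(\eta\kappa_i)\}$. Setting $s_i^\star=-d_i$ yields the claimed form of $D^\star$. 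It is globally optimal because any feasible $D$ satisfies $q_\eta(D)\ge q_\eta(P(D))\ge q_\eta(D^\star)$.
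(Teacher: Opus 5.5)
Your proposal is correct and follows essentially the same route as the paper. You project onto $\mathcal S=\operatorname{span}\{E_i\}$, using $\mathcal C$-invariance to discard the off-mode part, and check feasibility via $|u_i^\top D v_i|\le\|D\|_2$. You then solve the decoupled clipped scalar problems, working on $[-\rho,\rho]$ directly rather than first arguing $s_i\ge 0$ as the paper does; the two are equivalent.

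One small caution: do not complete $U,V$ to extra modes. The hypothesis $\mathcal C[E_i]=\kappa_iE_i$ is only given for the original $i\le p$, so invariance of $\mathcal S$ could fail for any added $E_i$. The argument needs no completion, so simply drop that sentence.
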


The proof is given in
Appendix~\ref{app:proof_aligned_optimum}.

Muon sets every active singular value to $\rho$. In contrast, Proposition~\ref{prop:aligned_optimum} shows that the curvature-aware optimum saturates mode $i$ only when $ g_i\geq\eta\rho\kappa_i$. Otherwise, $s_i^\star=\frac{g_i}{\eta\kappa_i}<\rho$. Thus, even when gradient and curvature share the same singular directions, the quadratic optimum generally has a nonuniform singular-value profile. Intuitively, directions with larger curvature are more costly to move along, so they should receive smaller updates unless the gradient signal is sufficiently strong. 

\subsection{Curvature Can Change the Singular Directions}
\label{sec:singular_direction_adaptation}

We next ask whether curvature can also change the singular directions of the optimal update.  Under K-FAC curvature $\mathcal C[D]=BDA$ with $A\succ0$ and $B\succ0$, the unconstrained quadratic minimizer is $ D_{\mathrm{unc}} = -\frac{1}{\eta}B^{-1}GA^{-1}$. For a rank-one gradient $G=g\,uv^\top$, this becomes $ D_{\mathrm{unc}} = -\frac{g}{\eta} (B^{-1}u)(A^{-1}v)^\top$, which is generally not aligned with the singular directions of $G$. Thus anisotropic curvature naturally induces singular-direction adaptation.

One might ask whether this effect disappears when the spectral-norm constraint is active. The following result shows that it does not.

\begin{proposition}[Direction adaptation with an active spectral constraint]
\label{prop:active_direction_adaptation}
Suppose
\[
\mathcal C[D]=DA,\qquad A\succ0,\qquad
G=g\,uv^\top,
\]
where $g>0$ and $\|u\|_2=\|v\|_2=1$. If the unconstrained minimizer
lies outside the spectral-norm ball, then there exists a unique
$\lambda>0$ such that
\begin{equation}
D^\star
=
-g\,u
\left[(\eta A+\lambda I)^{-1}v\right]^\top,
\qquad
\|D^\star\|_2=\rho.
\end{equation}
If $v$ is not an eigenvector of $A$, the right singular direction of
$D^\star$ is not parallel to that of $G$.
\end{proposition}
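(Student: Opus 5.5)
The plan is to reduce the problem to a one-sided quadratic over the unit-norm-constrained right factor, solve it via KKT/Lagrangian duality, and then read off the direction. The objective is
\[
q(D)=g\,u^\top D v+\tfrac{\eta}{2}\operatorname{tr}(D A D^\top).
\]
The first step is a reduction to rank-one updates of the form $D=-u w^\top$. Write $D=u x^\top + P_\perp D$, where $x=D^\top u$ and $P_\perp=I-uu^\top$. The linear term depends only on $x^\top v$. The quadratic term splits as
\[
\operatorname{tr}(DAD^\top)=x^\top A x+\operatorname{tr}(P_\perp D A D^\top P_\perp)\ge x^\top A x,
\]
where the cross terms vanish because $u^\top P_\perp=0$. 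Moreover, $\|u x^\top\|_2=\|x\|_2=\|D^\top u\|_2\le\|D\|_2$. So replacing $D$ by $u x^\top$ keeps it feasible and does not increase $q$. Since $A\succ0$, the decrease is strict whenever $P_\perp D\neq0$, so every minimizer has this rank-one form. Setting $x=-w$, the problem becomes
\[
\min_{\|w\|_2\le\rho}\ \phi(w)=-g\,v^\top w+\tfrac{\eta}{2}w^\top A w.
\]

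Next I treat this as a Euclidean-ball trust-region problem with a strictly convex objective, so it has a unique minimizer. Slater's condition holds, so KKT is necessary and sufficient. The conditions are: there exists $\lambda\ge0$ with $(\eta A+\lambda I)w=g v$, $\|w\|\le\rho$, and $\lambda(\|w\|-\rho)=0$.

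The unconstrained minimizer of $q$ is $D_{\mathrm{unc}}=-\tfrac{g}{\eta}u(A^{-1}v)^\top$, which has spectral norm $\tfrac{g}{\eta}\|A^{-1}v\|$. By hypothesis this lies outside the ball, so $\lambda=0$ is impossible and we need $\lambda>0$ with $\|w\|=\rho$.

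To get existence and uniqueness of $\lambda$, define $\psi(\lambda)=\|g(\eta A+\lambda I)^{-1}v\|_2$. In the eigenbasis $(\mu_j,e_j)$ of $A$,
\[
\psi(\lambda)^2=g^2\sum_j\frac{(e_j^\top v)^2}{(\eta\mu_j+\lambda)^2}.
\]
This is continuous and strictly decreasing on $[0,\infty)$, since $v\neq0$. It satisfies $\psi(0)>\rho$ and $\psi(\lambda)\to0$ as $\lambda\to\infty$, so the intermediate value theorem gives a unique $\lambda>0$ with $\psi(\lambda)=\rho$. Then $w=g(\eta A+\lambda I)^{-1}v$, so $D^\star=-g\,u[(\eta A+\lambda I)^{-1}v]^\top$, and its spectral norm is $\|w\|=\rho$ because $D^\star$ is rank one.

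Finally, for the direction claim: the right singular vector of $D^\star$ is $w/\|w\|$. Suppose $w\parallel v$, say $(\eta A+\lambda I)^{-1}v=c\,v$. Then $c\neq0$ and $Av=\tfrac{1/c-\lambda}{\eta}v$, so $v$ is an eigenvector of $A$. Taking the contrapositive gives the claim.

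The main obstacle I expect is the first step. I need to argue carefully that restricting to $D=-uw^\top$ loses nothing under the spectral-norm constraint, and in particular that $\|D^\top u\|\le\|D\|_2$ together with the nonnegativity of the discarded quadratic term suffices. Everything after that is the standard trust-region secular-equation argument.
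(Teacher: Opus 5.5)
Your proposal is correct and follows essentially the same route as the paper: the decomposition $D=ux^\top+D_\perp$ with $u^\top D_\perp=0$, the reduction to a Euclidean-ball problem in $x$, KKT stationarity giving $x^\star=-g(\eta A+\lambda I)^{-1}v$, and the same eigenvector contrapositive for the direction claim. Your strict-monotonicity argument for $\psi(\lambda)$ also supplies the existence and uniqueness of $\lambda>0$, which the paper's proof only asserts when it says $\lambda$ is chosen so that $\|D^\star\|_2=\rho$.
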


The proof is given in Appendix~\ref{app:proof_active_direction_adaptation}. The mechanism is easiest to see in the eigenbasis of the curvature. Let $Aq_j=a_jq_j$ and write $ v=\sum_j c_jq_j$. Then $(\eta A+\lambda I)^{-1}v =\sum_j\frac{c_j}{\eta a_j+\lambda}q_j$. 

Hence, different curvature components of $v$ are rescaled by different amounts: high-curvature components are suppressed more strongly than low-curvature ones. Unless $v$ is a curvature eigenvector, these unequal rescalings change its orientation. The multiplier $\lambda>0$ adjusts the overall shrinkage so that the solution lies on the boundary $\|D^\star\|_2=\rho$, but does not remove this anisotropic reweighting. Thus, the spectral-norm constraint controls the size of the update without forcing it to preserve the singular directions of the gradient.

A simple two-dimensional example makes this explicit. Let
$
B=I,
A=
\begin{pmatrix}
100&0\\
0&1
\end{pmatrix},
G=
\begin{pmatrix}
101&2\\
0&0
\end{pmatrix},
\eta=1,
$
and choose $ \rho=\sqrt{2}$. The unconstrained quadratic minimizer gives $D_{\mathrm{unc}}=-GA^{-1}$. Its spectral norm satisfies $\|D_{\mathrm{unc}}\|_2>\sqrt{2}=\rho$, so the spectral-norm constraint is active. The KKT conditions in Proposition~\ref{prop:active_direction_adaptation} are satisfied with $\lambda=1$, which gives
$D^\star=-G(A+I)^{-1}=-
\begin{pmatrix}
1&1\\
0&0\end{pmatrix},\|D^\star\|_2=\sqrt{2}=\rho$.
The right singular direction of $G$ is proportional to $(101,2)^\top$, whereas that of $D^\star$ is proportional to $(1,1)^\top$. Thus, the optimal singular direction changes even though the solution lies on the boundary of the same spectral-norm ball used by Muon.

\subsection{QSD vs. Muon in Practice}
\label{sec:qsd_vs_muon}

The preceding analysis shows that curvature can change both the singular amplitudes and singular directions of the update. We now examine whether these effects occur in practice. At a fixed training step, we compute the Muon and QSD directions from the same momentum matrix $M_t=U\Sigma V^\top$, thereby excluding differences caused by training trajectories. For this analysis, Muon is computed using the exact polar factor.

We measure spectral and directional deviations by
\begin{equation}
\Delta_{\mathrm{spec}}(D)
=
\frac{\|\sigma(D)-\bar\sigma(D)\mathbf 1\|_2}
{\|\sigma(D)\|_2},
\qquad
\Delta_{\mathrm{dir}}(D;M)
=
\frac{\|D-\mathcal P_M(D)\|_F}{\|D\|_F},
\label{eq:practice_metrics}
\end{equation}
where $\bar\sigma(D)$ is the mean singular value and
$\mathcal P_M(D)=U\operatorname{diag}(\operatorname{diag}(U^\top D V))V^\top$.
Thus, $\Delta_{\mathrm{spec}}$ measures deviation from a flat singular
spectrum, while $\Delta_{\mathrm{dir}}$ measures deviation from the singular
directions of $M$. Both are zero for exact Muon.

Figure~\ref{fig:qsd_vs_muon} shows that QSD differs from Muon in both aspects. Its singular spectrum is clearly nonuniform (Figure~\ref{fig:qsd_vs_muon}(a)), while $U^\top D_{\mathrm{QSD}}V$ contains substantial off-diagonal mass (Figure~\ref{fig:qsd_vs_muon}(b)), indicating that QSD also changes the singular directions. Across layers, both deviations remain consistently nonzero (Figure~\ref{fig:qsd_vs_muon}(c)). These observations confirm that the curvature-induced effects identified above also arise during training.

\begin{figure}[t]
    \centering
    \includegraphics[width=0.8\linewidth]{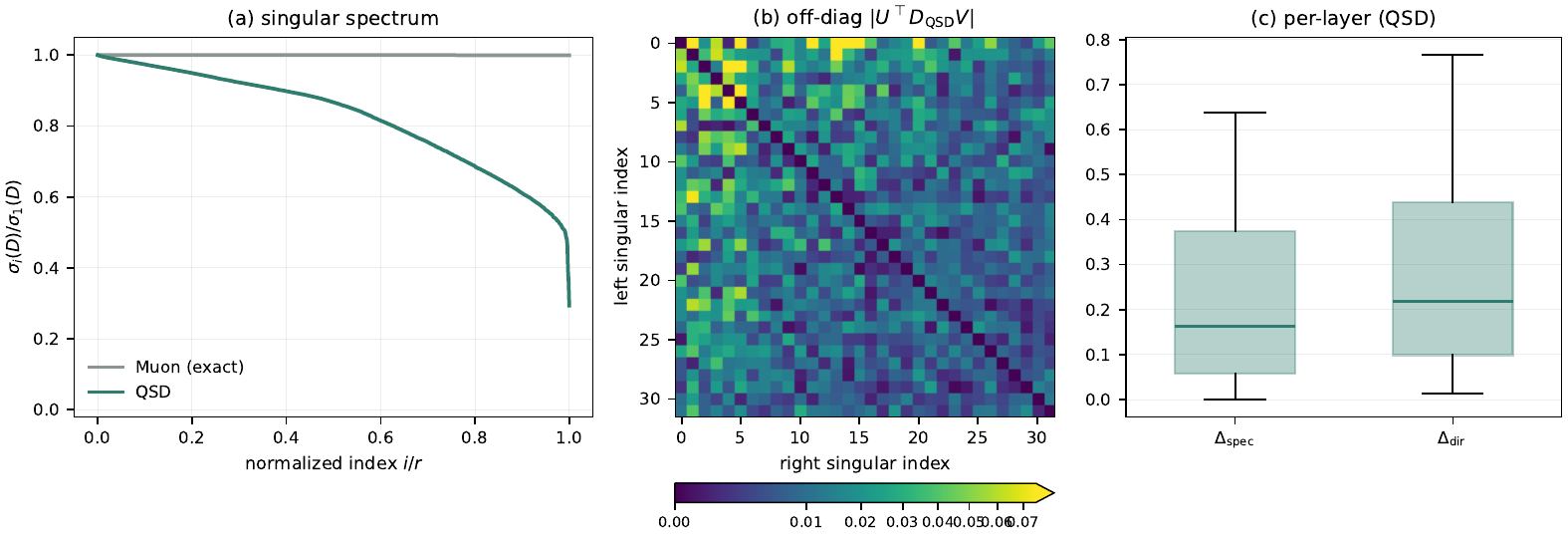}
    \vspace{-0.5em}
    \caption{\textbf{QSD differs from Muon in both singular amplitudes and directions at a mild step of training.} \textbf{(a)} Normalized singular spectra of Muon and QSD for a representative layer. \textbf{(b)} Magnitude of $U^\top D_{\mathrm{QSD}}V$ in the momentum singular direction (top $32\times32$ block). Off-diagonal entries indicate changes in singular directions. \textbf{(c)} Distribution of $\Delta_{\mathrm{spec}}$ and $\Delta_{\mathrm{dir}}$ across QSD layers.
    }
    \label{fig:qsd_vs_muon}
\end{figure}

\vspace{-0.5em}
\section{Experiments}
\label{sec:exp}
\vspace{-0.5em}
\subsection{Pre-training} 
\vspace{-0.5em}
\paragraph{Baseline.} We first pre-train GPT-124M with a compute-optimal \citep{hoffmann2022training} token-to-parameter (T2P) ratio. All models are trained on the FineWeb dataset \citep{penedo2024fineweb}, with the same tokenizer as Muon Speed Run \citep{modded_nanogpt_2024} and Newton-Muon \citep{du2026newton}, with a vocabulary size of 50257 and a batch size of 512. Training is conducted on RTX 5000 Ada GPUs using mixed precision (bfloat16). Following the setup in \citep{modded_nanogpt_2024,du2026newton,zhang2026muon+,amsel2026polar}, we apply QSD (or Muon) to all parameters except embeddings, unembeddings, normalization layers, and positional encodings, which are optimized using AdamW. For the polar operator, we adopt the same configuration as in \citep{jordan2024muon}. We sweep the AdamW learning rate in $[0.001, 0.0018, 0.0032, 0.0056, 0.01]$ and the remaining part in $[0.005, 0.01,0.02, 0.04]$ and report the best results in Table \ref{tab:main_results1}. Detailed hyperparameters are provided in the Appendix \ref{app:experimental_details}. As shown in Table \ref{tab:main_results1}, QSD outperforms Muon at 6\% time overhead. Due to the page limits, we defer the ablation for the baseline to Appendix \ref{sec:abla}.

\begin{table}[h]
\centering
\caption{Performance of pretraining GPT-124M on 4 RTX 5000 Ada GPUs.}
\vspace{-0.9em}
\label{tab:main_results1}
\begin{tabular}{lcccccc}
\toprule
Method & Run 1 & Run 2 & Run 3 & Avg. loss & Avg. time (s) & Time Overhead (\%) \\
\midrule
Muon & 3.3318 & 3.3312 & 3.3319 & 3.3316 & 7390.6 & 0.0 \\
QSD & 3.3151 & 3.3161 & 3.3161  & \textbf{3.3158} (\textcolor{red}{-0.0158}) & 7833.9 & +6.00 \\
\bottomrule
\end{tabular}
\end{table}

\vspace{-0.5em}
\paragraph{GPT-350M.} To verify whether the performance advantage of QSD can be maintained at a larger scale, we pre-trained GPT-350M under the optimal T2P ratio. We reuse the QSD-specific hyperparameters tuned on GPT-124M, and only adjust the learning rate together with the K-FAC factor-refresh and calibration intervals (and their EMA coefficients) to account for the longer training schedule. All other settings are kept unchanged; full values are listed in Table~\ref{tab:training_hyperparameters}. As shown in Table \ref{tab:main_results2}, QSD can still ensure performance gains after the model scale is increased.

\begin{table}[H]
\centering
\caption{Performance of pretraining GPT-350M on 4 RTX 5000 Ada GPUs.}
\vspace{-0.9em}
\label{tab:main_results2}
\begin{tabular}{lccc}
\toprule
Method & Val loss & Time (s) & Time Overhead (\%) \\
\midrule
Muon & 3.0148 & 53569.3 & 0.0 \\
QSD & \textbf{3.0018} (\textcolor{red}{-0.013}) & 56986.3 & +6.38 \\
\bottomrule
\end{tabular}
\end{table}
\vspace{-1.5em}

\subsection{Training Efficiency} 
\vspace{-0.5em}
In this section, we report the token and the wall-clock time required to reach the same validation loss, where the target is set to the final validation loss of the QSD. The results are shown in Table \ref{tab:kfac-eff}. QSD speeds up pretraining by up to 8.49\% in the evaluated settings.

\begin{table}[h]
\centering
\caption{Training cost required to reach a target loss.}
\vspace{-0.9em}
\label{tab:kfac-eff}
\begin{tabular}{lccc}
\toprule
Model & Target loss & Token Speed-up & Wall-clock Time Speed-up  \\
\midrule
GPT-124M & 3.3159 & $\uparrow$ 15\% & $\uparrow$ 8.49\% \\
GPT-350M & 3.0018 & $\uparrow$ 14.3\% & $\uparrow$ 7.4\%  \\
\bottomrule
\end{tabular}
\end{table}
\vspace{-0.5em}

\subsection{Comparison with Muon Variants}
\vspace{-0.5em}
In this section, we compare QSD with the other Muon variants, NorMuon \citep{li2025normuon} and the Newton-Muon \citep{du2026newton}. As shown in Table \ref{tab:main_resultsn}, QSD consistently outperforms both NorMuon and Newton-Muon across all three runs.

\begin{table}[h]
\centering
\caption{Validation performance of GPT-124M on different methods on 4 RTX 5000 Ada GPUs.}
\vspace{-0.9em}
\label{tab:main_resultsn}
\begin{tabular}{lcccc}
\toprule
Method & Run 1 & Run 2 & Run 3 & Avg. loss \\
\midrule
NorMuon & 3.3264 & 3.3280 & 3.3265 & 3.3270\\
Newton Muon & 3.3233 & 3.3237 & 3.3241 & 3.3237 \\
QSD & 3.3151 & 3.3161 & 3.3161  & \textbf{3.3158} \\
\bottomrule
\end{tabular}
\end{table}
\vspace{-0.5em}

\vspace{-0.5em}
\section{Conclusion}
\vspace{-0.5em}
We studied the optimization of quadratic model under the spectral-norm constraint, in order to devise better spectral allocation (and associated spectral subspace update) scheme than Muon for LLM pretraining, thus proposed the QSD algorithm.
Analysis shows that even in the isotropic curvature case, the spectral allocation should not be uniform, rather it need to be  proportional to the \textit{gradient-to-curvature ratio}.  Furthermore, the singular directions of the optimal update should also be modified if the spectral directions of gradient and curvature is not aligned.
QSD incurs only mild complexity increase because it uses inversion-free Kronecker-factored curvature and online Frank--Wolfe updates. The resulting QSD method admits simple optimization guarantees and consistently improves validation loss over Muon and recent Muon variants in GPT pre-training. At matched validation loss, QSD achieves up to an $8.49\%$ wall-clock speedup at matched validation loss in our experiments.

\bibliography{reference}

@article{hoffmann2022training,
  title={Training compute-optimal large language models},
  author={Hoffmann, Jordan and Borgeaud, Sebastian and Mensch, Arthur and Buchatskaya, Elena and Cai, Trevor and Rutherford, Eliza and Casas, Diego de Las and Hendricks, Lisa Anne and Welbl, Johannes and Clark, Aidan and others},
  journal={arXiv preprint arXiv:2203.15556},
  year={2022}
}

@article{loshchilov2017decoupled,
  title={Decoupled weight decay regularization},
  author={Loshchilov, Ilya and Hutter, Frank},
  journal={arXiv preprint arXiv:1711.05101},
  year={2017}
}

@article{jordan2024muon,
  title={Muon: An optimizer for hidden layers in neural networks, 2024},
  author={Jordan, Keller and Jin, Yuchen and Boza, Vlado and You, Jiacheng and Cesista, Franz and Newhouse, Laker and Bernstein, Jeremy},
  journal={URL https://kellerjordan. github. io/posts/muon},
  volume={6},
  number={3},
  pages={4},
  year={2024}
}

@article{liu2025muon,
  title={Muon is scalable for llm training},
  author={Liu, Jingyuan and Su, Jianlin and Yao, Xingcheng and Jiang, Zhejun and Lai, Guokun and Du, Yulun and Qin, Yidao and Xu, Weixin and Lu, Enzhe and Yan, Junjie and others},
  journal={arXiv preprint arXiv:2502.16982},
  year={2025}
}

@article{shah2025practical,
  title={Practical efficiency of muon for pretraining},
  author={Shah, Ishaan and Polloreno, Anthony M and Stratos, Karl and Monk, Philip and Chaluvaraju, Adarsh and Hojel, Andrew and Ma, Andrew and Thomas, Anil and Tanwer, Ashish and Shah, Darsh J and others},
  journal={arXiv preprint arXiv:2505.02222},
  year={2025}
}

@inproceedings{wen2026fantastic,
  title={Fantastic pretraining optimizers and where to find them},
  author={Wen, Kaiyue and Hall, David and Ma, Tengyu and Liang, Percy},
  booktitle={International Conference on Learning Representations},
  volume={2026},
  pages={144731--144838},
  year={2026}
}

@article{pethick2025training,
  title={Training deep learning models with norm-constrained lmos},
  author={Pethick, Thomas and Xie, Wanyun and Antonakopoulos, Kimon and Zhu, Zhenyu and Silveti-Falls, Antonio and Cevher, Volkan},
  journal={arXiv preprint arXiv:2502.07529},
  year={2025}
}

@article{du2026newton,
  title={The newton-muon optimizer},
  author={Du, Zhehang and Su, Weijie},
  journal={arXiv preprint arXiv:2604.01472},
  year={2026}
}

@article{xu2026fismo,
  title={FISMO: Fisher-structured momentum-orthogonalized optimizer},
  author={Xu, Chenrui and Yan, Wenjing and Zhang, Ying-Jun Angela},
  journal={arXiv preprint arXiv:2601.21750},
  year={2026}
}

@article{zhang2026mousse,
  title={Mousse: Rectifying the geometry of muon with curvature-aware preconditioning},
  author={Zhang, Yechen and Xing, Shuhao and Huang, Junhao and Lv, Kai and Zhou, Yunhua and Qiu, Xipeng and Guo, Qipeng and Chen, Kai},
  journal={arXiv preprint arXiv:2603.09697},
  year={2026}
}

@article{wu2026spectral,
  title={Spectral Allocation: Why Muon Outperforms Adam, and How to Improve Muon},
  author={Wu, Xiaodong and Yu, Wenyi and Zhang, Chao and Woodland, Philip},
  journal={arXiv preprint arXiv:2608.25990},
  year={2026}
}

@article{dong2026muon,
  title = {{Muon}$^{p}$: Muon with Fractional Spectral Powers},
  author={Dong, Yihe and Sawin, Will},
  journal={arXiv preprint arXiv:2606.13867},
  year={2026}
}

@article{bernstein2024old,
  title={Old optimizer, new norm: An anthology},
  author={Bernstein, Jeremy and Newhouse, Laker},
  journal={arXiv preprint arXiv:2409.20325},
  year={2024}
}

@article{crawshaw2025exploration,
  title={An exploration of non-euclidean gradient descent: Muon and its many variants},
  author={Crawshaw, Michael and Modi, Chirag and Liu, Mingrui and Gower, Robert M},
  journal={arXiv preprint arXiv:2510.09827},
  year={2025}
}

@article{riabinin2025gluon,
  title={Gluon: Making muon \& scion great again!(bridging theory and practice of lmo-based optimizers for llms)},
  author={Riabinin, Artem and Shulgin, Egor and Gruntkowska, Kaja and Richt{\'a}rik, Peter},
  journal={arXiv preprint arXiv:2505.13416},
  year={2025}
}

@inproceedings{martens2015optimizing,
  title={Optimizing neural networks with kronecker-factored approximate curvature},
  author={Martens, James and Grosse, Roger},
  booktitle={International conference on machine learning},
  pages={2408--2417},
  year={2015},
  organization={PMLR}
}

@article{george2018fast,
  title={Fast approximate natural gradient descent in a kronecker factored eigenbasis},
  author={George, Thomas and Laurent, C{\'e}sar and Bouthillier, Xavier and Ballas, Nicolas and Vincent, Pascal},
  journal={Advances in neural information processing systems},
  volume={31},
  year={2018}
}

@inproceedings{gupta2018shampoo,
  title={Shampoo: Preconditioned stochastic tensor optimization},
  author={Gupta, Vineet and Koren, Tomer and Singer, Yoram},
  booktitle={International Conference on Machine Learning},
  pages={1842--1850},
  year={2018},
  organization={PMLR}
}

@inproceedings{morwani2025new,
  title={A new perspective on shampoo's preconditioner},
  author={Morwani, Depen and Shapira, Itai and Vyas, Nikhil and Malach, Eran and Kakade, Sham and Janson, Lucas},
  booktitle={International Conference on Learning Representations},
  volume={2025},
  pages={5802--5822},
  year={2025}
}

@inproceedings{vyas2025soap,
  title={SOAP: Improving and stabilizing shampoo using adam for language modeling},
  author={Vyas, Nikhil and Morwani, Depen and Zhao, Rosie and Shapira, Itai and Brandfonbrener, David and Janson, Lucas and Kakade, Sham},
  booktitle={International Conference on Learning Representations},
  volume={2025},
  pages={93423--93444},
  year={2025}
}

@inproceedings{jaggi2013revisiting,
  title={Revisiting Frank-Wolfe: Projection-free sparse convex optimization},
  author={Jaggi, Martin},
  booktitle={International conference on machine learning},
  pages={427--435},
  year={2013},
  organization={PMLR}
}

@article{schraudolph2002fast,
  title={Fast curvature matrix-vector products for second-order gradient descent},
  author={Schraudolph, Nicol N},
  journal={Neural computation},
  volume={14},
  number={7},
  pages={1723--1738},
  year={2002},
  publisher={MIT Press}
}

@article{wu2026dynmuon,
  title={DynMuon: A Dynamic Spectral Shaping View of Muon},
  author={Wu, Fangzhou and Shah, Rikhav and Silwal, Sandeep and Zhang, Qiuyi},
  journal={arXiv preprint arXiv:2605.17109},
  year={2026}
}

@inproceedings{pang2026htmuon,
  title={Htmuon: Improving muon via heavy-tailed spectral correction},
  author={Pang, Tianyu and Fang, Yujie and Liu, Zihang and Deng, Shenyang and Hsiung, Lei and Yu, Shuhua and Yang, Yaoqing},
  booktitle={Findings of the Association for Computational Linguistics: ACL 2026},
  pages={36504--36535},
  year={2026}
}

@article{wang2026muon,
  title={Why Muon Outperforms Adam: A Curvature Perspective},
  author={Wang, Shuche and Zhang, Fengzhuo and Li, Jiaxiang and Bergemann, Dirk and Yang, Zhuoran},
  journal={arXiv preprint arXiv:2606.04662},
  year={2026}
}

@article{frank1956algorithm,
  title={An algorithm for quadratic programming},
  author={Frank, Marguerite and Wolfe, Philip},
  journal={Naval research logistics quarterly},
  volume={3},
  number={1-2},
  pages={95--110},
  year={1956},
  publisher={Wiley Online Library}
}

@article{penedo2024fineweb,
  title={The fineweb datasets: Decanting the web for the finest text data at scale},
  author={Penedo, Guilherme and Kydl{\'\i}{\v{c}}ek, Hynek and Lozhkov, Anton and Mitchell, Margaret and Raffel, Colin and Von Werra, Leandro and Wolf, Thomas and others},
  journal={Advances in Neural Information Processing Systems},
  volume={37},
  pages={30811--30849},
  year={2024}
}

@misc{modded_nanogpt_2024,
  author       = {Keller Jordan and Jeremy Bernstein and Brendan Rappazzo and
                  @fernbear.bsky.social and Boza Vlado and You Jiacheng and
                  Franz Cesista and Braden Koszarsky and @Grad62304977},
  title        = {modded-nanogpt: Speedrunning the NanoGPT baseline},
  year         = {2024},
  url          = {https://github.com/KellerJordan/modded-nanogpt}
}

@article{zhang2026muon+,
  title={Muon+: Towards better muon via one additional normalization step},
  author={Zhang, Ruijie and Zhao, Yequan and Liu, Ziyue and Wang, Zhengyang and Zhang, Zheng},
  journal={arXiv e-prints},
  pages={arXiv--2602},
  year={2026}
}

@inproceedings{amsel2026polar,
  title={The polar express: Optimal matrix sign methods and their application to the muon algorithm},
  author={Amsel, Noah and Persson, David and Musco, Christopher and Gower, Robert M},
  booktitle={International Conference on Learning Representations},
  volume={2026},
  pages={138323--138360},
  year={2026}
}

@article{li2025normuon,
  title={Normuon: Making muon more efficient and scalable},
  author={Li, Zichong and Liu, Liming and Liang, Chen and Chen, Weizhu and Zhao, Tuo},
  journal={arXiv preprint arXiv:2510.05491},
  year={2025}
}

@article{su2025isotropic,
  title={Isotropic curvature model for understanding deep learning optimization: Is gradient orthogonalization optimal?},
  author={Su, Weijie},
  journal={arXiv preprint arXiv:2511.00674},
  year={2025}
}

@article{kunstner2019limitations,
  title={Limitations of the empirical fisher approximation for natural gradient descent},
  author={Kunstner, Frederik and Hennig, Philipp and Balles, Lukas},
  journal={Advances in neural information processing systems},
  volume={32},
  year={2019}
}
\bibliographystyle{plain}

\appendix

\section{Organization of the Appendix}
\label{app:organization}

The appendix is organized as follows:
\begin{itemize}
    \item Sec.~\ref{sec:theory}: theoretical guarantees for the QSD inner solver, including the Frank--Wolfe optimality certificate and the $O(1/K)$ convergence rate.
    \item Sec.~\ref{app:theory_proofs}: detailed proofs of the theoretical results on curvature-aware spectral allocation, singular-direction adaptation, the Frank--Wolfe optimality certificate, and inner-loop convergence.
    \item Sec.~\ref{app:proof_fw_iterate_convergence}: convergence of the QSD iterates under positive damping.
    \item Sec.~\ref{app:proof_fw_monotonic}: monotonic surrogate descent under exact line search and the relation between QSD and the Muon update.
    \item Sec.~\ref{sec:curvature_inflation}: analysis of the curvature inflation factor and its effect on the spectral allocation.
    \item Sec.~\ref{sec:abla}: mechanism ablations and sensitivity analyses for the QSD-specific hyperparameters.
    \item Sec.~\ref{app:kfac_factor_estimation}: implementation details of K-FAC factor estimation.
    \item Sec.~\ref{app:curvature_calibration}: details of the GGN-based directional curvature calibration and its finite-difference implementation.
    \item Sec.~\ref{app:experimental_details}: detailed training settings and hyperparameters used in the GPT pre-training experiments.
    \item Sec.~\ref{sec:limitations}: limitations of QSD, its theoretical guarantees, and the current experimental evaluation.
\end{itemize}

\clearpage
\section{Theoretical Guarantees}
\label{sec:theory}
\vspace{-0.5em}

We analyze the inner Frank--Wolfe solver used by QSD. The Frank--Wolfe gap provides a computable certificate of surrogate suboptimality, while the iterates converge at an $O(1/K)$ rate. We also relate the resulting update to Muon under the same quadratic surrogate. All proofs are deferred to Appendix~\ref{app:theory_proofs}.

\subsection{Optimality Certificate}
\label{sec:fw_optimality_certificate}
\vspace{-0.5em}

Consider the convex quadratic surrogate
\begin{equation}
\widehat q_\eta(D)
=
\langle G,D\rangle
+
\frac{\eta}{2}
\langle D,\widehat{\mathcal C}[D]\rangle,
\qquad
\|D\|_2\leq\rho,
\label{eq:theory_surrogate}
\end{equation}
where $\widehat{\mathcal C}$ is positive semidefinite and self-adjoint. Let $ \widehat D^\star \in \arg\min_{\|D\|_2\leq\rho} \widehat q_\eta(D)$ denote an optimal solution. At Frank--Wolfe iteration $s$, define
\begin{equation}
R_s
=
\nabla_D\widehat q_\eta(D_s)
=
G+\eta\widehat{\mathcal C}[D_s].
\label{eq:theory_fw_residual}
\end{equation}
The linear minimization oracle is
\begin{equation}
S_s
\in
\arg\min_{\|S\|_2\leq\rho}
\langle R_s,S\rangle
=
-\rho\,\operatorname{msgn}(R_s),
\label{eq:theory_fw_atom}
\end{equation}
and the corresponding Frank--Wolfe gap is
\begin{equation}
\mathcal G_s
=
\langle R_s,D_s-S_s\rangle
=
\langle R_s,D_s\rangle
+
\rho\|R_s\|_*,
\label{eq:fw_gap_closed_form}
\end{equation}
where the second equality follows from spectral--nuclear norm duality.

\begin{proposition}[Frank--Wolfe optimality certificate]
\label{prop:fw_gap_certificate}
For every feasible $D_s$,
\begin{equation}
0
\leq
\widehat q_\eta(D_s)
-
\widehat q_\eta(\widehat D^\star)
\leq
\mathcal G_s.
\label{eq:fw_gap_certificate}
\end{equation}
In particular, $\mathcal G_s=0$ if and only if $D_s$ is optimal.
\end{proposition}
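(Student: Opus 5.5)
The plan is to use the standard convexity argument for Frank--Wolfe gaps, specialized to the spectral-norm ball. Since $\widehat{\mathcal C}$ is self-adjoint and positive semidefinite, $\widehat q_\eta$ is a convex quadratic whose gradient is $R_s=G+\eta\widehat{\mathcal C}[D_s]$, as in Eq.~\eqref{eq:theory_fw_residual}. The left inequality is immediate: $D_s$ is feasible and $\widehat D^\star$ minimizes over the feasible set. For the right inequality, I will use the first-order characterization of convexity,
\[
\widehat q_\eta(\widehat D^\star)\geq \widehat q_\eta(D_s)+\langle R_s,\widehat D^\star-D_s\rangle .
\]
This can also be checked exactly: the difference equals $\frac{\eta}{2}\langle \widehat D^\star-D_s,\widehat{\mathcal C}[\widehat D^\star-D_s]\rangle\ge0$. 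Rearranging gives $\widehat q_\eta(D_s)-\widehat q_\eta(\widehat D^\star)\le\langle R_s,D_s-\widehat D^\star\rangle$.

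Next, because $\widehat D^\star$ lies in the spectral-norm ball and $S_s$ minimizes $\langle R_s,\cdot\rangle$ over that ball (Eq.~\eqref{eq:theory_fw_atom}), we have $\langle R_s,\widehat D^\star\rangle\ge\langle R_s,S_s\rangle$. Hence $\langle R_s,D_s-\widehat D^\star\rangle\le\langle R_s,D_s-S_s\rangle=\mathcal G_s$. For the closed form, I will note that $\min_{\|S\|_2\le\rho}\langle R_s,S\rangle=-\rho\|R_s\|_*$, by duality of the spectral and nuclear norms: write $R_s=U\Sigma V^\top$, take $S=-\rho UV^\top$, and bound $|\langle R,S\rangle|\le\|R\|_*\|S\|_2$ via von Neumann's trace inequality.

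For the ``if and only if'', one direction is easy: if $\mathcal G_s=0$, the certificate gives $\widehat q_\eta(D_s)\le\widehat q_\eta(\widehat D^\star)$, so $D_s$ is optimal. The converse is the only step needing more than one line. Suppose $D_s$ is optimal. Then the first-order optimality condition for convex minimization over a convex set says $\langle R_s,D-D_s\rangle\ge0$ for all feasible $D$. To prove this condition, I will argue by contradiction. If some feasible $D$ had $\langle R_s,D-D_s\rangle<0$, then for small $t>0$,
\[
\widehat q_\eta(D_s+t(D-D_s))=\widehat q_\eta(D_s)+t\langle R_s,D-D_s\rangle+O(t^2)<\widehat q_\eta(D_s),
\]
which contradicts optimality. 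Taking $D=S_s$ then gives $\mathcal G_s\le0$. Also $\mathcal G_s\ge0$ always, since $D_s$ is itself feasible for the linear oracle, so $\langle R_s,S_s\rangle\le\langle R_s,D_s\rangle$. Therefore $\mathcal G_s=0$.

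I expect no real obstacle. The only mild care points are checking that self-adjointness of $\widehat{\mathcal C}$ gives exactly the gradient $G+\eta\widehat{\mathcal C}[D]$, and justifying the nuclear-norm formula.
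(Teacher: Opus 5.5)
Your proposal is correct and follows the paper's proof essentially step for step. Both use the first-order convexity inequality at $\widehat D^\star$, then the oracle bound $\langle R_s,S_s\rangle\le\langle R_s,\widehat D^\star\rangle$, then the first-order optimality condition evaluated at $D=S_s$ for the converse; your additions (the exact quadratic remainder identity, the contradiction argument for the variational inequality, and the explicit reason $\mathcal G_s\ge 0$) fill in details the paper treats as standard.
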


The proof is given in Appendix~\ref{app:proof_fw_gap}. Intuitively, $\mathcal G_s$ measures how much first-order improvement is still available within the spectral-norm ball. A small gap therefore indicates that no feasible direction can substantially improve the current surrogate value.

Exact line search monotonically decreases the quadratic surrogate, with strict decrease whenever the Frank--Wolfe gap is nonzero. Moreover, with zero initialization, the first Frank--Wolfe atom is exactly the Muon direction. Since the line search optimizes over the segment between zero and this atom, $\widehat q_\eta(D_1) \leq \widehat q_\eta(D_{\mathrm M})$, and subsequent Frank--Wolfe steps preserve this inequality by monotonic descent. Formal statements and proofs are provided in Appendix~\ref{app:proof_fw_monotonic} and Appendix~\ref{sec:theory_relation_muon}.

\subsection{Convergence of the QSD Inner Solver}
\label{sec:fw_convergence}
\vspace{-0.5em}

We now specialize to the practical QSD surrogate at a fixed training iteration $t$:
\begin{equation}
\widehat q_t(D)
=
\langle M_t,D\rangle
+
\frac{\eta_t\tau}{2}
\left[
\alpha_t\operatorname{tr}
\left(
D^\top B_tDA_t
\right)
+
d\|D\|_F^2
\right],
\qquad
\|D\|_2\leq\rho.
\label{eq:practical_quadratic_surrogate}
\end{equation}
Assume $ A_t\succeq0, B_t\succeq0, \alpha_t>0, d\geq0$,  and let $ D_t^\star \in \arg\min_{\|D\|_2\leq\rho} \widehat q_t(D) $ denote an optimal solution.

\begin{theorem}[Inner-loop convergence of QSD]
\label{thm:kfac_fw_inner_convergence}
Let $D_{t,K}$ denote the iterate after $K\geq1$ Frank--Wolfe refinements with exact line search, starting from any feasible initialization $D_{t,0}$. Then, with $r=\min\{m,n\}$,
\begin{equation}
\widehat q_t(D_{t,K})
-
\widehat q_t(D_t^\star)
\leq
\frac{
8\eta_t\tau r\rho^2
\left(
\alpha_t\|A_t\|_2\|B_t\|_2+d
\right)
}{
K+2
}.
\label{eq:kfac_fw_inner_convergence}
\end{equation}
Hence, the inner solver converges to the optimal surrogate value at rate $O(K^{-1})$.
\end{theorem}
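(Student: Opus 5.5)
The plan is to run the standard Jaggi-style Frank--Wolfe analysis. The quadratic is $L$-smooth with an explicit constant, and the spectral-norm ball has an explicit Frobenius diameter. Exact line search makes the per-step progress at least as good as the open-loop step $\gamma_s=2/(s+2)$, which gives the $O(1/K)$ bound.

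\textbf{Step 1 (smoothness constant).} The Hessian of $\widehat q_t$ is the linear map $D\mapsto \eta_t\tau(\alpha_t B_tDA_t+dD)$. Its operator norm on $(\mathbb R^{m\times n},\|\cdot\|_F)$ is at most $\eta_t\tau(\alpha_t\|A_t\|_2\|B_t\|_2+d)=:L$. This follows from $\|B_tDA_t\|_F\le\|B_t\|_2\|D\|_F\|A_t\|_2$. Hence for any $D$ and $\Delta$,
\[
\widehat q_t(D+\gamma\Delta)\le \widehat q_t(D)+\gamma\langle R,\Delta\rangle+\tfrac{L}{2}\gamma^2\|\Delta\|_F^2 .
\]

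\textbf{Step 2 (diameter).} Any feasible $D$ satisfies $\|D\|_F^2\le r\|D\|_2^2\le r\rho^2$. So for feasible $D$ and $S$ we get $\|S-D\|_F\le 2\sqrt r\rho$, and $\operatorname{diam}_F^2\le 4r\rho^2$. Then $L\operatorname{diam}^2\le 4\eta_t\tau r\rho^2(\alpha_t\|A_t\|_2\|B_t\|_2+d)$. The claimed bound is exactly $2L\operatorname{diam}^2/(K+2)$, the standard Frank--Wolfe constant.

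\textbf{Step 3 (recursion).} Write $h_s=\widehat q_t(D_{t,s})-\widehat q_t(D_t^\star)$. By convexity, $\mathcal G_s\ge h_s$ (Proposition~\ref{prop:fw_gap_certificate}). Exact line search over $[0,1]$ is at least as good as any fixed $\gamma\in[0,1]$, so for every such $\gamma$,
\[
h_{s+1}\le(1-\gamma)h_s+\tfrac{\gamma^2}{2}L\operatorname{diam}^2 .
\]

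\textbf{Step 4 (induction).} Set $C=L\operatorname{diam}^2$ and show $h_s\le 2C/(s+2)$ for all $s\ge1$.

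\emph{Base case.} Take $\gamma=1$ at $s=0$: $h_1\le C/2\le 2C/3$. This holds for an arbitrary feasible start, since the $(1-\gamma)h_0$ term vanishes; note $h_0$ could be as large as about $C$.

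\emph{Inductive step.} Take $\gamma=2/(s+2)$:
\[
h_{s+1}\le \frac{s}{s+2}\cdot\frac{2C}{s+2}+\frac{2C}{(s+2)^2}=\frac{2C(s+1)}{(s+2)^2}\le\frac{2C}{s+3}.
\]

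Setting $s=K$ gives the claimed bound with constant $2\cdot4\eta_t\tau r\rho^2(\cdots)=8\eta_t\tau r\rho^2(\cdots)$.

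\textbf{Main obstacle.} The only delicate point is that the initialization is an arbitrary warm start rather than zero. This is handled by using the $\gamma=1$ comparison at the first step, which removes any dependence on $h_0$. The line-search clipping in \eqref{eq:fw_exact_line_search} also needs a check: it matches the true minimizer over $[0,1]$ of a convex one-dimensional quadratic, including the degenerate case of zero curvature along $\Delta$. In that degenerate case the minimizer is at an endpoint and the comparison inequality still holds.
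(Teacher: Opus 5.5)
Your proposal is correct and follows the paper's proof step for step. Both use the same Lipschitz constant $L_t=\eta_t\tau(\alpha_t\|A_t\|_2\|B_t\|_2+d)$, the same Frobenius diameter bound $2\sqrt r\,\rho$, and the recursion $h_{s+1}\le(1-\gamma)h_s+\tfrac{\gamma^2}{2}C_t$, with exact line search dominating any fixed $\gamma$; both take a $\gamma=1$ first step to remove dependence on the warm start and then run the $\gamma_s=2/(s+2)$ induction, which you write out explicitly where the paper just cites Jaggi.

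One harmless aside: your remark that $h_0$ is at most about $C$ is not right in general, since $h_0$ scales with $\|M_t\|_*$ and can exceed $C$ arbitrarily. This does not affect the proof, because the $\gamma=1$ step eliminates $h_0$ entirely.
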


The proof is given in Appendix~\ref{app:proof_fw_convergence}. This matches the classical $O(1/K)$ convergence rate of Frank--Wolfe for smooth convex objectives \citep{jaggi2013revisiting}, with the constant here specialized to the spectral-norm feasible set and the K-FAC quadratic curvature. 

Because Theorem~\ref{thm:kfac_fw_inner_convergence} holds for any feasible initialization, it also applies directly to the practical warm start $D_{t,0}=D_{t-1,K}$. When $d>0$, the surrogate is additionally strongly convex, which implies convergence of the iterate itself to the unique inner optimum. The corresponding bound is given in Appendix~\ref{app:proof_fw_iterate_convergence}.

\clearpage
\section{Proofs of Theoretical Results}
\label{app:theory_proofs}

This appendix provides proofs for the results stated in
Sections~\ref{sec:beyond_muon} and Appendix~\ref{sec:theory}.

\subsection{Proof of Proposition~\ref{prop:aligned_optimum}}
\label{app:proof_aligned_optimum}

Let
$$
\mathcal S
=
\operatorname{span}\{E_1,\ldots,E_p\}.
$$
Because every $E_i$ is an eigenmode of the self-adjoint operator $\mathcal C$, the subspace $\mathcal S$ is invariant under $\mathcal C$. Its orthogonal complement $\mathcal S^\perp$ is also invariant.

For any feasible $D$, write
\begin{equation}
D
=
D_{\parallel}+D_{\perp},
\qquad
D_{\parallel}\in\mathcal S,
\qquad
D_{\perp}\in\mathcal S^\perp.
\label{eq:app_aligned_decomposition}
\end{equation}
Since $G\in\mathcal S$,
$$
\langle G,D_{\perp}\rangle=0.
$$
Moreover, self-adjointness and invariance imply
$$
\left\langle
D_{\parallel},
\mathcal C[D_{\perp}]
\right\rangle
=
0.
$$
Therefore,
\begin{align}
q_\eta(D)
&=
\langle G,D_{\parallel}\rangle
+
\frac{\eta}{2}
\left\langle
D_{\parallel},
\mathcal C[D_{\parallel}]
\right\rangle
\nonumber\\
&\qquad
+
\frac{\eta}{2}
\left\langle
D_{\perp},
\mathcal C[D_{\perp}]
\right\rangle
\nonumber\\
&\geq
q_\eta(D_{\parallel}),
\label{eq:app_discard_orthogonal_component}
\end{align}
where the inequality follows from $\mathcal C\succeq0$.

We next verify that $D_{\parallel}$ remains feasible. Write
$$
D_{\parallel}
=
\sum_{i=1}^{p}d_iE_i,
\qquad
d_i
=
\langle E_i,D\rangle
=
u_i^\top Dv_i.
$$
For every $i$,
$$
|d_i|
=
|u_i^\top Dv_i|
\leq
\|D\|_2.
$$
Since the matrices $E_i=u_iv_i^\top$ share orthonormal left and right singular vectors,
$$
\|D_{\parallel}\|_2
=
\max_i|d_i|
\leq
\|D\|_2
\leq
\rho.
$$
Thus $D_{\parallel}$ is feasible, and there exists an optimal solution entirely in $\mathcal S$.

We may therefore write
\begin{equation}
D
=
-
\sum_{i=1}^{p}s_iE_i
=
-
U\operatorname{diag}(s_1,\ldots,s_p)V^\top.
\label{eq:app_aligned_candidate}
\end{equation}
Its spectral norm is
$$
\|D\|_2
=
\max_i|s_i|,
$$
so feasibility is equivalent to
$$
|s_i|\leq\rho
\qquad
\text{for all }i.
$$

Using
$$
\mathcal C[E_i]=\kappa_iE_i
$$
and the orthonormality of the $E_i$'s, the objective becomes
\begin{equation}
q_\eta(D)
=
\sum_{i=1}^{p}
\left(
-g_is_i
+
\frac{\eta\kappa_i}{2}s_i^2
\right).
\label{eq:app_aligned_scalar_problem}
\end{equation}
Because $g_i\geq0$, an optimizer may be chosen with $s_i\geq0$. The problem therefore decouples into
\begin{equation}
\min_{0\leq s_i\leq\rho}
\left\{
-g_is_i
+
\frac{\eta\kappa_i}{2}s_i^2
\right\}.
\label{eq:app_modewise_problem}
\end{equation}

If $\kappa_i>0$, the unconstrained minimizer satisfies
$$
-g_i+\eta\kappa_i s_i=0,
$$
and therefore
$$
s_i^{\mathrm{unc}}
=
\frac{g_i}{\eta\kappa_i}.
$$
Clipping to the feasible interval gives
$$
s_i^\star
=
\min
\left\{
\rho,
\frac{g_i}{\eta\kappa_i}
\right\}.
$$

If $\kappa_i=0$, the mode-wise objective becomes $-g_is_i$, which is minimized at $s_i^\star=\rho$ whenever $g_i>0$. If $g_i=0$, every $s_i\in[0,\rho]$ is optimal, so $s_i^\star=\rho$ may be selected without loss of generality. This proves Proposition~\ref{prop:aligned_optimum}.




\subsection{Proof of Proposition~\ref{prop:active_direction_adaptation}}
\label{app:proof_active_direction_adaptation}

\begin{proof}
Consider
\begin{equation}
\min_{\|D\|_2\leq\rho}
\left\{
\langle G,D\rangle
+
\frac{\eta}{2}
\operatorname{tr}(D^\top D A)
\right\},
\label{eq:app_active_direction_problem}
\end{equation}
where
\[
G=g\,uv^\top,
\qquad
A\succ0,
\qquad
\|u\|_2=\|v\|_2=1.
\]

For any feasible $D$, decompose
\begin{equation}
D
=
u x^\top + D_\perp,
\qquad
u^\top D_\perp=0.
\label{eq:app_active_direction_decomposition}
\end{equation}
Since $G=g\,uv^\top$,
\begin{equation}
\langle G,D_\perp\rangle=0.
\end{equation}
Moreover,
\begin{align}
\operatorname{tr}(D^\top D A)
&=
x^\top A x
+
\operatorname{tr}(D_\perp^\top D_\perp A)
\nonumber\\
&\geq
x^\top A x,
\end{align}
because $A\succ0$. Also,
\begin{equation}
\|u x^\top\|_2
=
\|x\|_2
\leq
\|D\|_2.
\end{equation}
Therefore, there exists an optimal solution of the form
\begin{equation}
D^\star=u{x^\star}^\top,
\end{equation}
where $x^\star$ solves
\begin{equation}
\min_{\|x\|_2\leq\rho}
\left\{
g\,v^\top x
+
\frac{\eta}{2}x^\top A x
\right\}.
\label{eq:app_active_direction_vector_problem}
\end{equation}

The unconstrained minimizer is
\begin{equation}
x_{\mathrm{unc}}
=
-\frac{g}{\eta}A^{-1}v.
\end{equation}
By assumption,
\begin{equation}
\|x_{\mathrm{unc}}\|_2>\rho,
\end{equation}
so the Euclidean-norm constraint in Eq.~\ref{eq:app_active_direction_vector_problem} is active.

The KKT conditions are therefore
\begin{equation}
g v+\eta A x^\star+\lambda x^\star=0,
\qquad
\lambda>0,
\qquad
\|x^\star\|_2=\rho.
\label{eq:app_active_direction_kkt}
\end{equation}
Solving the stationarity condition gives
\begin{equation}
x^\star
=
-g(\eta A+\lambda I)^{-1}v.
\label{eq:app_active_direction_xstar}
\end{equation}
Hence,
\begin{equation}
D^\star
=
-g\,u
\left[
(\eta A+\lambda I)^{-1}v
\right]^\top,
\end{equation}
with $\lambda>0$ chosen so that
$\|D^\star\|_2=\|x^\star\|_2=\rho$.

It remains to compare the right singular directions. Since $G$ is rank one, its right singular direction is $v$. The right singular direction of $D^\star$ is proportional to $(\eta A+\lambda I)^{-1}v$.

If these two directions were parallel, then for some scalar $c\neq0$,
\begin{equation}
(\eta A+\lambda I)^{-1}v
=
c v.
\end{equation}
Multiplying by $\eta A+\lambda I$ gives
\begin{equation}
v
=
c(\eta A+\lambda I)v,
\end{equation}
and therefore
\begin{equation}
Av
=
\frac{1/c-\lambda}{\eta}v.
\end{equation}
Thus, $v$ would be an eigenvector of $A$.

Consequently, if $v$ is not an eigenvector of $A$,
\begin{equation}
(\eta A+\lambda I)^{-1}v
\not\parallel v,
\end{equation}
and the right singular direction of the constrained optimum differs from that of the gradient. Since $\lambda>0$ and $\|D^\star\|_2=\rho$, this direction change occurs while the spectral-norm constraint is active.
\end{proof}

\subsection{Proof of Proposition~\ref{prop:fw_gap_certificate}}
\label{app:proof_fw_gap}

\begin{proof}
Since $\widehat q_\eta$ is convex, for any feasible $D$,
\begin{equation}
\widehat q_\eta(D)
\geq
\widehat q_\eta(D_s)
+
\left\langle
\nabla_D\widehat q_\eta(D_s),
D-D_s
\right\rangle.
\end{equation}
Taking $D=\widehat D^\star$ and using $R_s=\nabla_D\widehat q_\eta(D_s)$ gives
\begin{equation}
\widehat q_\eta(D_s)
-
\widehat q_\eta(\widehat D^\star)
\leq
\langle R_s,D_s-\widehat D^\star\rangle.
\end{equation}

By definition, $S_s$ minimizes the linearized objective over the same feasible set, and therefore
\begin{equation}
\langle R_s,S_s\rangle
\leq
\langle R_s,\widehat D^\star\rangle.
\end{equation}
Hence,
\begin{align}
\widehat q_\eta(D_s)
-
\widehat q_\eta(\widehat D^\star)
&\leq
\langle R_s,D_s-\widehat D^\star\rangle
\\
&\leq
\langle R_s,D_s-S_s\rangle
\\
&=
\mathcal G_s.
\end{align}
The lower bound follows from the optimality of $\widehat D^\star$.

It remains to characterize the zero-gap case. Since $\mathcal G_s\geq0$, if $\mathcal G_s=0$, the bound above implies
\begin{equation}
\widehat q_\eta(D_s)
=
\widehat q_\eta(\widehat D^\star),
\end{equation}
so $D_s$ is optimal. Conversely, if $D_s$ is optimal, the first-order optimality condition for convex optimization over the feasible set is
\begin{equation}
\langle R_s,D-D_s\rangle
\geq0
\qquad
\text{for all feasible }D.
\end{equation}
In particular, taking $D=S_s$ gives
\begin{equation}
\langle R_s,D_s-S_s\rangle
\leq0.
\end{equation}
Since the Frank--Wolfe gap is always nonnegative, this implies
$\mathcal G_s=0$.
\end{proof}

\subsection{Proof of Theorem~\ref{thm:kfac_fw_inner_convergence}}
\label{app:proof_fw_convergence}

\begin{proof}
Fix a training iteration $t$. The practical QSD surrogate is
\begin{equation}
\widehat q_t(D)
=
\langle M_t,D\rangle
+
\frac{\eta_t\tau}{2}
\left[
\alpha_t
\operatorname{tr}
\left(
D^\top B_tDA_t
\right)
+
d\|D\|_F^2
\right].
\label{eq:app_practical_surrogate}
\end{equation}
Its gradient is
\begin{equation}
\nabla_D\widehat q_t(D)
=
M_t
+
\eta_t\tau
\left(
\alpha_t B_tDA_t+dD
\right).
\label{eq:app_practical_gradient}
\end{equation}

Define
\begin{equation}
L_t
=
\eta_t\tau
\left(
\alpha_t\|A_t\|_2\|B_t\|_2+d
\right).
\label{eq:proof_Lt}
\end{equation}
For any matrices $D$ and $D'$,
\begin{align}
&
\left\|
\nabla_D\widehat q_t(D)
-
\nabla_D\widehat q_t(D')
\right\|_F
\nonumber\\
&\quad=
\eta_t\tau
\left\|
\alpha_t B_t(D-D')A_t
+
d(D-D')
\right\|_F
\nonumber\\
&\quad\leq
\eta_t\tau
\left(
\alpha_t\|B_t\|_2\|A_t\|_2+d
\right)
\|D-D'\|_F
\nonumber\\
&\quad=
L_t\|D-D'\|_F.
\label{eq:proof_gradient_lipschitz}
\end{align}
Thus, $\widehat q_t$ has an $L_t$-Lipschitz gradient with respect to the Frobenius norm.

Let
\begin{equation}
\mathcal D
=
\{D:\|D\|_2\leq\rho\}.
\end{equation}
For any $D\in\mathcal D$, with $r=\min\{m,n\}$,
\begin{equation}
\|D\|_F
\leq
\sqrt r\,\|D\|_2
\leq
\sqrt r\,\rho.
\end{equation}
Hence, for any $D,D'\in\mathcal D$,
\begin{equation}
\|D-D'\|_F
\leq
2\sqrt r\,\rho.
\label{eq:proof_domain_diameter}
\end{equation}

The Frank--Wolfe curvature constant of $\widehat q_t$ over $\mathcal D$ therefore satisfies
\begin{equation}
C_t
\leq
L_t
\operatorname{diam}_F(\mathcal D)^2.
\end{equation}
Using Eq.~\ref{eq:proof_domain_diameter},
\begin{align}
C_t
&\leq
4L_t r\rho^2
\\
&=
4\eta_t\tau r\rho^2
\left(
\alpha_t\|A_t\|_2\|B_t\|_2+d
\right).
\label{eq:proof_curvature_constant}
\end{align}

Let
\begin{equation}
h_s
=
\widehat q_t(D_{t,s})
-
\widehat q_t(D_t^\star).
\end{equation}
The standard Frank--Wolfe descent inequality gives, for any
$\gamma\in[0,1]$,
\begin{equation}
h_{s+1}
\leq
(1-\gamma)h_s
+
\frac{\gamma^2}{2}C_t.
\label{eq:proof_fw_recurrence}
\end{equation}
Exact line search performs at least as well as any prescribed
$\gamma$.

At the first iteration, choosing $\gamma=1$ gives
\begin{equation}
h_1
\leq
\frac{C_t}{2}.
\end{equation}
For subsequent iterations, the standard Frank--Wolfe \citep{jaggi2013revisiting} argument with $\gamma_s=2/(s+2)$ yields
\begin{equation}
h_K
\leq
\frac{2C_t}{K+2},
\qquad
K\geq1.
\end{equation}
Substituting Eq.~\ref{eq:proof_curvature_constant} gives
\begin{equation}
\boxed{
\widehat q_t(D_{t,K})
-
\widehat q_t(D_t^\star)
\leq
\frac{
8\eta_t\tau r\rho^2
\left(
\alpha_t\|A_t\|_2\|B_t\|_2+d
\right)
}{
K+2
}.
}
\end{equation}
This proves the result.
\end{proof}

\clearpage
\section{Iterate Convergence under Damping}
\label{app:proof_fw_iterate_convergence}

For any perturbation $H$, the quadratic form induced by the Hessian of $\widehat q_t$ is
\begin{align}
\left\langle
H,
\nabla^2\widehat q_t[H]
\right\rangle
&=
\eta_t\tau
\left[
\alpha_t
\operatorname{tr}
\left(
H^\top B_tHA_t
\right)
+
d\|H\|_F^2
\right].
\end{align}
Since $A_t\succeq0$, $B_t\succeq0$, and $\alpha_t>0$,
\begin{equation}
\alpha_t
\operatorname{tr}
\left(
H^\top B_tHA_t
\right)
\geq0.
\end{equation}
Therefore, when $d>0$,
\begin{equation}
\left\langle
H,
\nabla^2\widehat q_t[H]
\right\rangle
\geq
\eta_t\tau d\|H\|_F^2.
\end{equation}
Hence, $\widehat q_t$ is $\eta_t\tau d$-strongly convex with respect to the Frobenius norm, and its constrained minimizer $D_t^\star$ is unique.

Because $D_t^\star$ minimizes $\widehat q_t$ over the convex feasible set, strong convexity implies
\begin{equation}
\widehat q_t(D)
-
\widehat q_t(D_t^\star)
\geq
\frac{\eta_t\tau d}{2}
\|D-D_t^\star\|_F^2.
\end{equation}
In particular,
\begin{equation}
\|D_{t,K}-D_t^\star\|_F^2
\leq
\frac{
2\left[
\widehat q_t(D_{t,K})
-
\widehat q_t(D_t^\star)
\right]
}{
\eta_t\tau d
}.
\end{equation}

Applying Theorem~\ref{thm:kfac_fw_inner_convergence} gives
\begin{align}
\|D_{t,K}-D_t^\star\|_F^2
&\leq
\frac{2}{\eta_t\tau d}
\frac{
8\eta_t\tau r\rho^2
\left(
\alpha_t\|A_t\|_2\|B_t\|_2+d
\right)
}{
K+2
}
\\
&=
\frac{
16r\rho^2
\left(
\alpha_t\|A_t\|_2\|B_t\|_2+d
\right)
}{
d(K+2)
}.
\end{align}
Taking square roots yields
\begin{equation}
\boxed{
\|D_{t,K}-D_t^\star\|_F
\leq
4\rho
\sqrt{
\frac{
r\left(
\alpha_t\|A_t\|_2\|B_t\|_2+d
\right)
}{
d(K+2)
}
}.
}
\label{eq:fw_iterate_convergence}
\end{equation}

\clearpage
\section{Monotonic surrogate descent}
\label{app:proof_fw_monotonic}

\begin{proposition}[Monotonic surrogate descent]
\label{prop:fw_monotonic}
Suppose
\begin{equation}
\gamma_s
\in
\arg\min_{\gamma\in[0,1]}
\widehat q_\eta
\left(
D_s+\gamma(S_s-D_s)
\right).
\label{eq:theory_exact_line_search}
\end{equation}
Then
\begin{equation}
\widehat q_\eta(D_{s+1})
\leq
\widehat q_\eta(D_s),
\label{eq:fw_monotonic}
\end{equation}
with strict inequality whenever $\mathcal G_s>0$.
\end{proposition}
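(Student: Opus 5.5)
The plan is to reduce everything to a one-dimensional convex quadratic along the Frank--Wolfe segment. Set $\Delta_s=S_s-D_s$ and $\phi(\gamma)=\widehat q_\eta(D_s+\gamma\Delta_s)$ for $\gamma\in[0,1]$. Because $\widehat q_\eta$ is quadratic with a self-adjoint positive semidefinite operator, the expansion is exact:
\[
\phi(\gamma)=\widehat q_\eta(D_s)-\gamma\,\mathcal G_s+\frac{\gamma^2\eta}{2}\langle\Delta_s,\widehat{\mathcal C}[\Delta_s]\rangle .
\]
Here the linear coefficient is $\phi'(0)=\langle R_s,\Delta_s\rangle=-\mathcal G_s$, which uses the definition of the gap in Eq.~\ref{eq:fw_gap_closed_form}.

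Monotonicity is then immediate. The choice $\gamma=0$ is admissible, so $\gamma_s$ minimizes $\phi$ over $[0,1]$ and gives $\widehat q_\eta(D_{s+1})=\phi(\gamma_s)\le\phi(0)=\widehat q_\eta(D_s)$. The iterate $D_{s+1}$ remains feasible because it is a convex combination of the feasible points $D_s$ and $S_s$.

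For strictness, assume $\mathcal G_s>0$ and write $c=\eta\langle\Delta_s,\widehat{\mathcal C}[\Delta_s]\rangle\ge0$. For any $\gamma\in(0,1]$,
\[
\phi(\gamma)-\phi(0)=-\gamma\mathcal G_s+\tfrac{c}{2}\gamma^2 .
\]
This is strictly negative once $\gamma<2\mathcal G_s/c$, or for every $\gamma\in(0,1]$ when $c=0$. So some $\gamma\in(0,\min\{1,2\mathcal G_s/c\})$ gives $\phi(\gamma)<\phi(0)$, and therefore the minimizer satisfies $\phi(\gamma_s)<\phi(0)$. Equivalently, $\phi'(0)<0$ for a differentiable function means the minimum over $[0,1]$ is not attained at $0$ and lies strictly below $\phi(0)$.

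The only real point is handling the degenerate curvature $c=0$, where the clipped formula in Eq.~\ref{eq:fw_exact_line_search} is ill-defined. The argument above avoids the formula and works directly with the argmin, so no separate case analysis is needed beyond noting that $\phi$ is then linear with negative slope and $\gamma_s=1$.
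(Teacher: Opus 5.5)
Your proof is correct and follows the paper's argument. Both restrict to $\phi(\gamma)=\widehat q_\eta(D_s+\gamma\Delta_s)$, get monotonicity from the admissibility of $\gamma=0$, and get strictness from $\phi'(0)=\langle R_s,\Delta_s\rangle=-\mathcal G_s<0$; your exact quadratic expansion simply makes the paper's ``by differentiability, some small $\gamma>0$ decreases $\phi$'' step explicit, including the threshold $\gamma<2\mathcal G_s/c$ and the degenerate case $c=0$.
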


Because $\gamma=0$ is feasible, exact line search cannot increase the surrogate. When $\mathcal G_s>0$, the Frank--Wolfe direction is strictly descending.

\begin{proof}
Let
\begin{equation}
\Delta_s
=
S_s-D_s,
\end{equation}
and define the one-dimensional function
\begin{equation}
\phi_s(\gamma)
=
\widehat q_\eta(D_s+\gamma\Delta_s),
\qquad
\gamma\in[0,1].
\end{equation}
Since $\gamma=0$ is feasible for the line search and $\gamma_s$ minimizes $\phi_s$ over $[0,1]$,
\begin{equation}
\widehat q_\eta(D_{s+1})
=
\phi_s(\gamma_s)
\leq
\phi_s(0)
=
\widehat q_\eta(D_s).
\end{equation}

For strict descent, the directional derivative at $\gamma=0$ is
\begin{align}
\phi_s'(0)
&=
\langle R_s,S_s-D_s\rangle
\\
&=
-\mathcal G_s.
\end{align}
If $\mathcal G_s>0$, then $\phi_s'(0)<0$. By differentiability, there exists a sufficiently small $\gamma>0$ such that
\begin{equation}
\phi_s(\gamma)
<
\phi_s(0).
\end{equation}
Exact line search therefore satisfies
\begin{equation}
\widehat q_\eta(D_{s+1})
<
\widehat q_\eta(D_s).
\end{equation}
\end{proof}

\subsection{Relation to the Muon Update}
\label{sec:theory_relation_muon}

The zero-initialized solver can be compared directly with the Muon update.

\begin{proposition}[One-step surrogate dominance over Muon]
\label{prop:fw_dominates_muon}
Suppose that $D_0=0$ and exact line search is used. Let
\begin{equation}
D_{\mathrm M}
=
-\rho\,\operatorname{msgn}(G).
\end{equation}
Then
\begin{equation}
\widehat q_\eta(D_1)
\leq
\widehat q_\eta(D_{\mathrm M}).
\label{eq:fw_one_step_dominates_muon}
\end{equation}
Moreover, for every $K\geq1$,
\begin{equation}
\widehat q_\eta(D_K)
\leq
\widehat q_\eta(D_{\mathrm M}).
\label{eq:fw_dominates_muon}
\end{equation}
\end{proposition}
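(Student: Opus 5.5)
The plan is to first establish \eqref{eq:fw_one_step_dominates_muon} from the structure of the first Frank--Wolfe step, and then extend it to every $K\geq1$ using Proposition~\ref{prop:fw_monotonic}.

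\textbf{First step.} With $D_0=0$ the surrogate gradient is $R_0=\nabla_D\widehat q_\eta(0)=G+\eta\widehat{\mathcal C}[0]=G$. The linear minimization oracle \eqref{eq:theory_fw_atom} may therefore be taken as
\[
S_0=-\rho\,\operatorname{msgn}(G)=D_{\mathrm M}.
\]
The search direction is $\Delta_0=S_0-D_0=D_{\mathrm M}$, so the segment searched over is $\{\gamma D_{\mathrm M}:\gamma\in[0,1]\}$. This segment contains $D_{\mathrm M}$ itself, at $\gamma=1$.

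Let $\phi_0(\gamma)=\widehat q_\eta(\gamma D_{\mathrm M})$. Since $\gamma_0$ minimizes $\phi_0$ over $[0,1]$, we get
\[
\widehat q_\eta(D_1)=\phi_0(\gamma_0)\le\phi_0(1)=\widehat q_\eta(D_{\mathrm M}),
\]
which is exactly \eqref{eq:fw_one_step_dominates_muon}.

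\textbf{All later steps.} For $K\geq1$, apply Proposition~\ref{prop:fw_monotonic} repeatedly to get $\widehat q_\eta(D_{s+1})\le\widehat q_\eta(D_s)$ for every $s\geq1$. Induction on $K$ then gives $\widehat q_\eta(D_K)\le\widehat q_\eta(D_1)\le\widehat q_\eta(D_{\mathrm M})$.

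\textbf{Main subtlety: non-uniqueness of the oracle.} When $G$ is rank-deficient, the minimizer of $\langle G,S\rangle$ over the spectral ball is not unique, and $\operatorname{msgn}$ must be read as a fixed selection. I would handle this by adopting the convention that the oracle returns $-\rho\,\operatorname{msgn}(G)$, which is exactly what the algorithm computes.

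Alternatively, the claim can be made selection-independent by comparing against any Muon-type atom $D_{\mathrm M}$ that coincides with the chosen $S_0$. Every such atom lies in the LMO argmin, so the same line-search argument applies to it. Everything else in the proof is immediate from feasibility of $\gamma=1$ and from Proposition~\ref{prop:fw_monotonic}.
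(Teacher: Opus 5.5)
Your proof is correct and follows the paper's argument essentially verbatim: with $D_0=0$ you get $R_0=G$ and $S_0=D_{\mathrm M}$, feasibility of $\gamma=1$ in the exact line search gives $\widehat q_\eta(D_1)\le\widehat q_\eta(D_{\mathrm M})$, and Proposition~\ref{prop:fw_monotonic} then carries this to every $K\ge1$. Your remark about fixing a selection of $\operatorname{msgn}$ when $G$ is rank-deficient is a sensible clarification that the paper leaves implicit.
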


Since $D_0=0$, the first Frank--Wolfe atom is $S_0=D_{\mathrm M}$. Exact line search therefore chooses the best point on the segment
\[
\{\gamma D_{\mathrm M}:\gamma\in[0,1]\}.
\]
which contains the full Muon update at $\gamma=1$. Subsequent Frank--Wolfe steps preserve the inequality by Proposition~\ref{prop:fw_monotonic}.

This comparison concerns the quadratic surrogate rather than the exact training loss, and uses zero initialization only to make the relation to Muon explicit.

\begin{proof}
Under the zero initialization $D_0=0$,
\begin{equation}
R_0
=
\nabla_D\widehat q_\eta(0)
=
G.
\end{equation}
The first Frank--Wolfe atom is therefore
\begin{equation}
S_0
=
-\rho\,\operatorname{msgn}(G)
=
D_{\mathrm M}.
\end{equation}

The first iterate has the form
\begin{equation}
D_1
=
D_0+\gamma_0(S_0-D_0)
=
\gamma_0D_{\mathrm M},
\end{equation}
where exact line search chooses
\begin{equation}
\gamma_0
\in
\arg\min_{\gamma\in[0,1]}
\widehat q_\eta(\gamma D_{\mathrm M}).
\end{equation}
Since $\gamma=1$ is feasible for this one-dimensional optimization,
\begin{equation}
\widehat q_\eta(D_1)
=
\min_{\gamma\in[0,1]}
\widehat q_\eta(\gamma D_{\mathrm M})
\leq
\widehat q_\eta(D_{\mathrm M}).
\end{equation}
This proves the one-step comparison.

For every subsequent Frank--Wolfe refinement, Proposition~\ref{prop:fw_monotonic} gives
\begin{equation}
\widehat q_\eta(D_K)
\leq
\widehat q_\eta(D_{K-1})
\leq
\cdots
\leq
\widehat q_\eta(D_1).
\end{equation}
Combining the two inequalities yields
\begin{equation}
\widehat q_\eta(D_K)
\leq
\widehat q_\eta(D_{\mathrm M}),
\qquad
K\geq1.
\end{equation}
\end{proof}

\clearpage
\section{Effect of Curvature Inflation}
\label{sec:curvature_inflation}

We next examine the curvature inflation used in Section~\ref{sec:practical_qsd}. Consider the aligned setting, and let $g_i\geq0$ and $\kappa_i\geq0$ denote the gradient magnitude and curvature of the $i$-th mode.

\begin{proposition}[Effect of curvature inflation]
\label{prop:curvature_inflation}
With calibration $\alpha>0$, inflation $\tau\geq1$, and damping
$d\geq0$, the optimal amplitude of the $i$-th mode is
\begin{equation}
s_i^\star(\tau)
=
\min\left\{
\rho,\,
\frac{g_i}
{\eta\tau(\alpha\kappa_i+d)}
\right\}.
\label{eq:inflated_mode_solution}
\end{equation}
For $g_i>0$, define the effective curvature-to-gradient ratio
\begin{equation}
\chi_i
=
\frac{\eta\rho(\alpha\kappa_i+d)}{g_i}.
\label{eq:curvature_gradient_ratio}
\end{equation}
The mode remains saturated at the spectral-norm boundary if $\tau\chi_i\leq1$, and becomes curvature-limited if $\tau\chi_i>1$. Equivalently, the transition occurs at the critical curvature
\begin{equation}
\kappa_i^{\mathrm{crit}}(\tau)
=
\frac{1}{\alpha}
\left(
\frac{g_i}{\eta\tau\rho}-d
\right),
\label{eq:critical_curvature}
\end{equation}
which decreases monotonically with $\tau$:
\begin{equation}
\frac{\partial\kappa_i^{\mathrm{crit}}}
{\partial\tau}
=
-
\frac{g_i}
{\alpha\eta\rho\tau^2}
<0.
\label{eq:critical_curvature_derivative}
\end{equation}
Once a mode is curvature-limited, its amplitude decreases as
$1/\tau$.
\end{proposition}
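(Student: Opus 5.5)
The plan is to reduce Proposition~\ref{prop:curvature_inflation} to Proposition~\ref{prop:aligned_optimum} by a substitution of curvature, and then read off the threshold and monotonicity statements by elementary algebra. In the aligned setting, each $E_i=u_iv_i^\top$ is an eigenmode of the raw curvature, $\mathcal C[E_i]=\kappa_iE_i$. I would first observe that the practical operator $\widehat{\mathcal C}[D]=\tau(\alpha\,\mathcal C[D]+dD)$ from Eq.~\ref{eq:practical_curvature_operator} keeps the same eigenmodes: indeed $\widehat{\mathcal C}[E_i]=\tau(\alpha\kappa_i+d)E_i$. It is also self-adjoint and positive semidefinite, since it is a nonnegative combination of the PSD operator $\mathcal C$ and the identity.

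The aligned hypothesis therefore holds for $\widehat{\mathcal C}$ with effective curvatures $\tilde\kappa_i=\tau(\alpha\kappa_i+d)\geq0$. Proposition~\ref{prop:aligned_optimum} then gives $s_i^\star=\min\{\rho,\,g_i/(\eta\tilde\kappa_i)\}$ when $\tilde\kappa_i>0$, which is exactly Eq.~\ref{eq:inflated_mode_solution}. The degenerate case $\tilde\kappa_i=0$ gives $s_i^\star=\rho$, which agrees with the formula under the convention $g_i/0=+\infty$.

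Next I would characterize saturation for $g_i>0$. The mode is saturated exactly when $\rho\le g_i/(\eta\tau(\alpha\kappa_i+d))$. Multiplying through by the positive quantity $\eta\tau(\alpha\kappa_i+d)/g_i$, this is equivalent to $\tau\chi_i\le1$, with $\chi_i$ as in Eq.~\ref{eq:curvature_gradient_ratio}. Solving $\tau\chi_i=1$ for $\kappa_i$ gives $\alpha\kappa_i+d=g_i/(\eta\tau\rho)$, and hence the critical curvature in Eq.~\ref{eq:critical_curvature}. Differentiating in $\tau$ gives $-g_i/(\alpha\eta\rho\tau^2)<0$, which is the monotone decrease claimed in Eq.~\ref{eq:critical_curvature_derivative}.

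Finally, in the curvature-limited regime $s_i^\star=g_i/(\eta(\alpha\kappa_i+d))\cdot\tau^{-1}$, so the amplitude scales as $1/\tau$. This regime persists as $\tau$ grows, because $\tau\chi_i$ is increasing in $\tau$ and so stays above $1$ once it has crossed it.

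The only step needing care is the reduction to Proposition~\ref{prop:aligned_optimum}, specifically the eigen-structure and self-adjointness of $\widehat{\mathcal C}$ (that proof uses self-adjointness to split off $\mathcal S^\perp$). This is immediate from the form of the operator, so I do not expect a real obstacle. A minor point is the case $\alpha\kappa_i+d=0$: there $\chi_i=0$, and the mode is saturated for all $\tau$, consistent with the statement.
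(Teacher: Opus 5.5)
Your proposal is correct and follows essentially the same route as the paper. The paper substitutes an aligned $D=-U\operatorname{diag}(s_1,\ldots,s_r)V^\top$ into the surrogate with $\widetilde{\mathcal C}[D]=\tau(\alpha\mathcal C[D]+dD)$ and solves the same decoupled scalar problems that you obtain by invoking Proposition~\ref{prop:aligned_optimum} with $\tilde\kappa_i=\tau(\alpha\kappa_i+d)$; it then does the identical threshold, critical-curvature, derivative and $1/\tau$ computations. Your explicit check that the inflated operator keeps the eigenmodes $E_i$ and remains self-adjoint and positive semidefinite is, if anything, slightly more careful, because it justifies restricting to aligned directions, which the paper's proof simply assumes.
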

Intuitively, high-curvature directions are more sensitive to finite updates, since the same update magnitude induces a larger second-order change in the objective. Increasing $\tau$ therefore makes these sensitive directions more conservative while preserving less curvature-limited directions.

$Proof.$ 
\begin{equation}
G
=
U\operatorname{diag}(g_1,\ldots,g_r)V^\top,
\qquad
g_i\geq0,
\end{equation}
and suppose that the curvature operator shares the same matrix modes,
\begin{equation}
\mathcal C[u_i v_i^\top]
=
\kappa_i u_i v_i^\top,
\qquad
\kappa_i\geq0.
\end{equation}
After calibration, damping, and curvature inflation, the effective curvature operator is
\begin{equation}
\widetilde{\mathcal C}[D]
=
\tau\left(
\alpha\mathcal C[D]+dD
\right).
\label{eq:app_inflated_operator}
\end{equation}

An aligned descent direction can be written as
\begin{equation}
D
=
-
U\operatorname{diag}(s_1,\ldots,s_r)V^\top,
\qquad
0\leq s_i\leq\rho.
\label{eq:app_aligned_direction}
\end{equation}
Because the singular vectors are orthonormal, $\|D\|_2\leq\rho$ is equivalent to $0\leq s_i\leq\rho$ for every mode.

Substituting Eq.~\ref{eq:app_aligned_direction} into the quadratic surrogate gives
\begin{align}
q_\eta(D)
&=
\langle G,D\rangle
+
\frac{\eta}{2}
\langle D,\widetilde{\mathcal C}[D]\rangle
\\
&=
\sum_{i=1}^{r}
\left[
-g_i s_i
+
\frac{\eta\tau}{2}
(\alpha\kappa_i+d)s_i^2
\right].
\label{eq:app_inflation_separable}
\end{align}
Hence the optimization separates across modes. For each $i$, we solve
\begin{equation}
\min_{0\leq s_i\leq\rho}
\left\{
-g_i s_i
+
\frac{\eta\tau}{2}
(\alpha\kappa_i+d)s_i^2
\right\}.
\label{eq:app_inflation_scalar}
\end{equation}

When $\alpha\kappa_i+d>0$, the unconstrained minimizer satisfies
\begin{equation}
-g_i
+
\eta\tau(\alpha\kappa_i+d)s_i
=
0,
\end{equation}
and is therefore
\begin{equation}
\bar s_i
=
\frac{g_i}
{\eta\tau(\alpha\kappa_i+d)}.
\label{eq:app_unconstrained_amplitude}
\end{equation}
Restricting this minimizer to the feasible interval $[0,\rho]$ yields
\begin{equation}
s_i^\star(\tau)
=
\min\left\{
\rho,\,
\frac{g_i}
{\eta\tau(\alpha\kappa_i+d)}
\right\},
\end{equation}
which proves Eq.~\ref{eq:inflated_mode_solution}. If $\alpha\kappa_i+d=0$ and $g_i>0$, the scalar objective is linear and is minimized at $s_i^\star=\rho$, consistent with the limiting form above.

For $g_i>0$, the mode remains saturated at the boundary precisely when
\begin{equation}
\frac{g_i}
{\eta\tau(\alpha\kappa_i+d)}
\geq\rho.
\end{equation}
Rearranging gives
\begin{equation}
\tau
\frac{\eta\rho(\alpha\kappa_i+d)}
{g_i}
\leq1,
\end{equation}
or equivalently
\begin{equation}
\tau\chi_i\leq1,
\end{equation}
where $\chi_i$ is defined in Eq.~\ref{eq:curvature_gradient_ratio}. Hence $s_i^\star=\rho$ throughout this regime, so changes in $\tau$ do not affect the amplitude as long as the boundary condition remains satisfied.

The same condition can be expressed as a threshold on curvature:
\begin{equation}
\kappa_i
\leq
\frac{1}{\alpha}
\left(
\frac{g_i}{\eta\tau\rho}-d
\right)
=
\kappa_i^{\mathrm{crit}}(\tau).
\end{equation}
Differentiating with respect to $\tau$ gives
\begin{equation}
\frac{\partial\kappa_i^{\mathrm{crit}}}
{\partial\tau}
=
-
\frac{g_i}
{\alpha\eta\rho\tau^2}
<0.
\end{equation}
Thus increasing $\tau$ monotonically lowers the maximum curvature for which a mode can remain boundary-saturated. If $\kappa_i^{\mathrm{crit}}(\tau)<0$, no mode with $\kappa_i\geq0$ and the corresponding $g_i$ can satisfy the boundary condition.

Once the mode lies in the curvature-limited regime,
\begin{equation}
s_i^\star(\tau)
=
\frac{g_i}
{\eta\tau(\alpha\kappa_i+d)}
=
\frac{1}{\tau}
\frac{g_i}
{\eta(\alpha\kappa_i+d)}.
\label{eq:app_interior_scaling}
\end{equation}
Therefore,
\begin{equation}
\frac{\partial s_i^\star}
{\partial\tau}
=
-
\frac{g_i}
{\eta\tau^2(\alpha\kappa_i+d)}
<0
\end{equation}
whenever $g_i>0$, so inflation monotonically attenuates every curvature-limited mode.

Consequently, curvature inflation has two effects: it leaves boundary-saturated modes unchanged, while lowering the transition threshold at which modes become controlled by the quadratic penalty. For fixed $g_i$, modes with larger $\kappa_i$ cross this threshold earlier as $\tau$ increases. This completes the proof.

\clearpage
\section{Ablation for the Baseline}
\label{sec:abla}

\subsection{Isolating the Effects of QSD}

QSD differs from Muon through two coupled effects: singular-direction adaptation and a nonuniform singular spectrum. To isolate the former, we flatten the spectrum of the full QSD update, $D_{\mathrm{QSD}}=U_Q\Sigma_QV_Q^\top \rightarrow D_{\mathrm{QSD\text{-}Orth}}=\rho U_QV_Q^\top$. Thus, Muon vs. QSD-Orth evaluates the QSD-selected directions under a flat spectrum, while QSD-Orth vs. QSD captures the additional effect of spectral shaping given those directions. As shown in Table~\ref{tab:qsd_mechanism_ablation}, direction adaptation improves validation loss by $0.0048$, while restoring the adaptive spectrum yields a further $0.0105$ improvement, for a total gain of $0.0153$ over Muon. Since directions and singular values are jointly optimized, these comparisons are controlled rather than strictly additive ablations.

\begin{table}[H]
\centering
\caption{Mechanism ablations of QSD.}
\vspace{-0.9em}
\label{tab:qsd_mechanism_ablation}

\small
\setlength{\tabcolsep}{10pt}
\renewcommand{\arraystretch}{1.08}

\begin{tabular}{lccc}
\toprule
Method & Muon &  QSD-Orth & QSD   \\
\midrule
Val. Loss $\downarrow$  & 3.3312 & 3.3264(\textcolor{red}{-0.0048})  & \textbf{3.3159}(\textcolor{red}{-0.0153}) \\
\bottomrule
\end{tabular}
\end{table}
\vspace{-0.5em}

\subsection{Ablation for the baseline hyperparameters}

We tune the QSD-specific hyperparameters on GPT-124M and reuse the resulting configuration for larger-scale experiments unless otherwise stated. Tables~\ref{tab:fwstep} and~\ref{tab:kfac-dampling} show the sensitivity to the number of FW steps and K-FAC damping, respectively. Increasing the number of FW steps improves validation loss up to about three steps, after which the gains become marginal. For K-FAC damping, a moderate damping value of $10^{-6}$ performs best, while both larger and smaller damping slightly degrade validation loss.

\begin{table}[H]
\centering

\begin{minipage}[t]{0.56\textwidth}
\centering
\captionof{table}{Effect of FW Steps (Warm Start).}
\vspace{-0.9em}
\label{tab:fwstep}
\resizebox{\linewidth}{!}{
\begin{tabular}{lccccc}
\toprule
FW step & 1 & 2 & 3 & 4 & 5 \\
\midrule
Val. loss & 3.3212 & 3.3199 & 3.3159 & 3.3161 & 3.3155 \\
\bottomrule
\end{tabular}
}
\end{minipage}
\hfill
\begin{minipage}[t]{0.38\textwidth}
\centering
\captionof{table}{Effect of K-FAC Damping.}
\vspace{-0.9em}
\label{tab:kfac-dampling}
\resizebox{\linewidth}{!}{
\begin{tabular}{lccc}
\toprule
$d$ & 1e-4 & 1e-6 & 1e-8  \\
\midrule
Val. loss & 3.3199 & 3.3159 & 3.3182  \\
\bottomrule
\end{tabular}
}
\end{minipage}

\end{table}

We further study the effects of the update interval and EMA coefficient for K-FAC factor estimation and curvature-scale calibration. Figure \ref{fig:kfac-CI-EC} and Figure \ref{fig:kfac-SC-EC} show a broad region of stable performance, where most variations are small and likely comparable to training noise. However, overly frequent or infrequent updates combined with insufficient smoothing can lead to noticeable degradation.

\begin{figure}[H]
    \centering

    \begin{subfigure}[t]{0.49\linewidth}
        \centering
        \includegraphics[width=\linewidth]{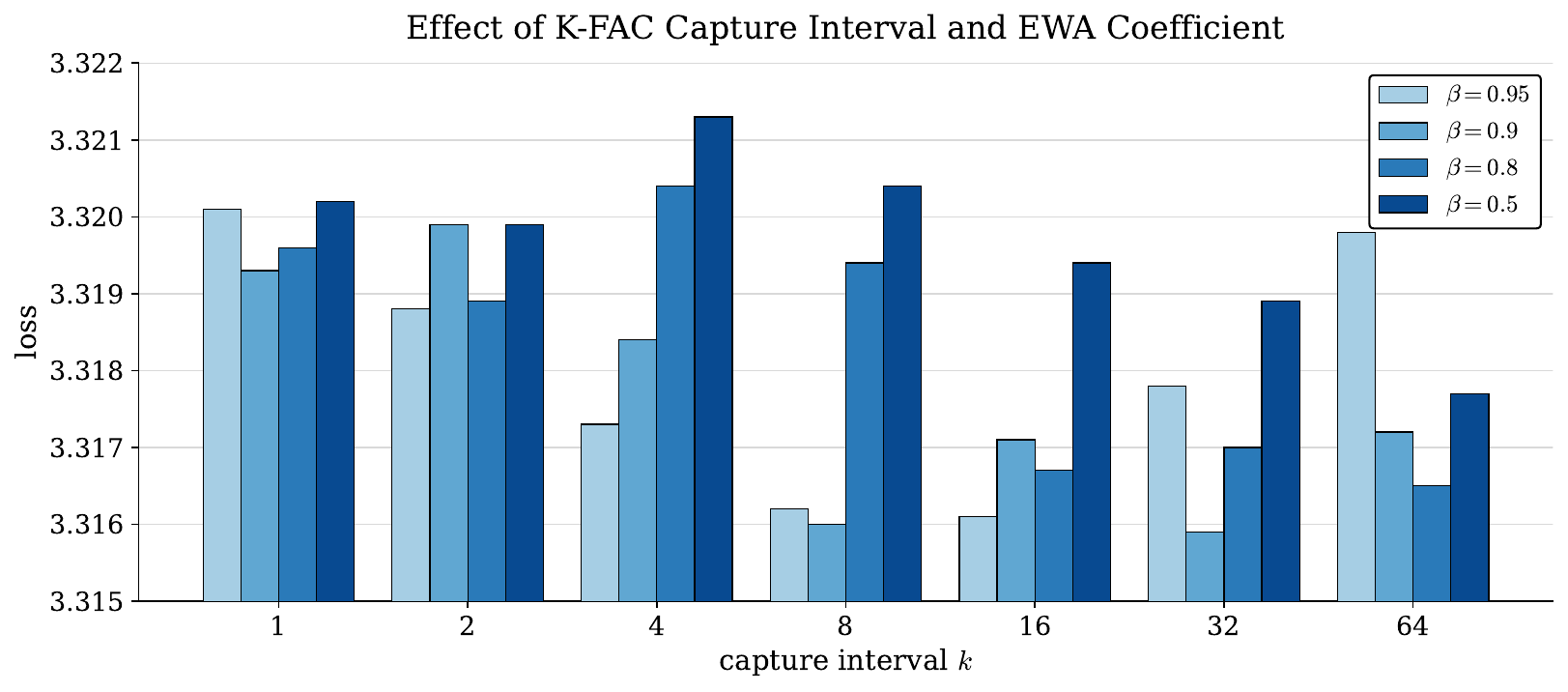}
        \caption{K-FAC Capture Interval and EMA Coefficient.}
        \label{fig:kfac-CI-EC}
    \end{subfigure}
    \hfill
    \begin{subfigure}[t]{0.49\linewidth}
        \centering
        \includegraphics[width=\linewidth]{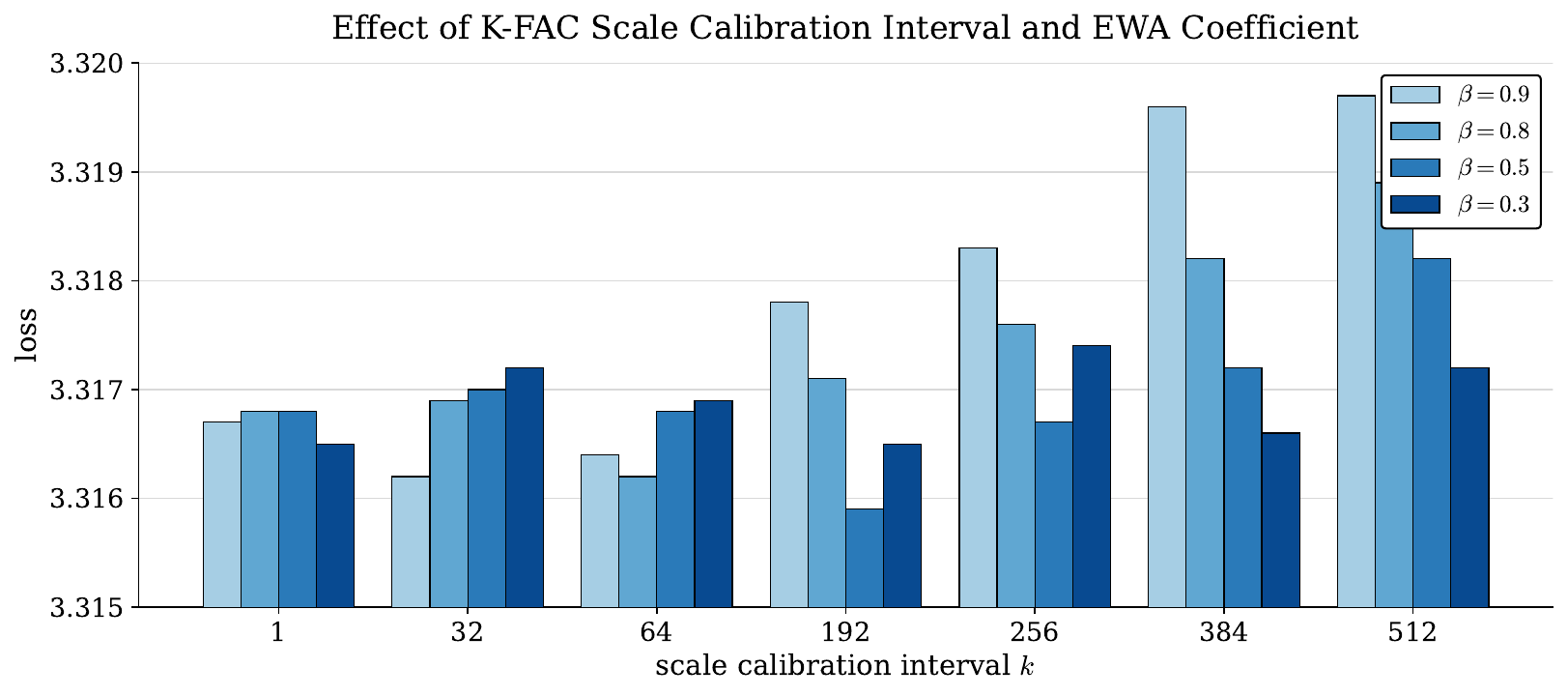}
        \caption{Scale Calibration Interval and EMA Coefficient.}
        \label{fig:kfac-SC-EC}
    \end{subfigure}

    \vspace{-0.5em}
    \caption{Sensitivity to K-FAC factor estimation and curvature-scale calibration settings.}
    \label{fig:kfac-hyperparameter-sensitivity}
\end{figure}

Tables~\ref{tab:kfac-sample-ratio} and~\ref{tab:kfac-sc-sample-ratio} examine the effects of the data sampling ratio used for K-FAC factor estimation and the sample ratio used for scale calibration. For K-FAC factor estimation, using only $1\%$ of the data achieves the best validation loss, with little benefit from larger sampling ratios, and for scale calibration, performance is stable across a wide range of sampling ratios, with $1\%$ already matching or outperforming substantially larger ratios.

\begin{minipage}[t]{0.49\textwidth}
\centering
\captionof{table}{Effect of K-FAC Sample Ratio.}
\vspace{-0.9em}
\label{tab:kfac-sample-ratio}
\resizebox{\linewidth}{!}{
\begin{tabular}{lccccc}
\toprule
Ratio & 100\% & 10\% & 1\% & 0.5\% & 0.1\% \\
\midrule
Val. loss & 3.3181 & 3.3161 & 3.3159 & 3.3173 & 3.3178 \\
\bottomrule
\end{tabular}
}
\end{minipage}
\hfill
\begin{minipage}[t]{0.49\textwidth}
\centering
\captionof{table}{K-FAC Scale Calibration Sample Ratio.}
\vspace{-0.9em}
\label{tab:kfac-sc-sample-ratio}
\resizebox{\linewidth}{!}{
\begin{tabular}{lccccc}
\toprule
Ratio & 40\% & 10\% & 1\% & 0.5\% & 0.1\% \\
\midrule
Val. loss & 3.3177 & 3.3169 & 3.3159 & 3.3169 & 3.3171 \\
\bottomrule
\end{tabular}
}
\end{minipage}

Table~\ref{tab:kfac-trs} studies the effect of the curvature inflation factor. Performance improves as $\tau$ increases up to $1.5$, suggesting that mild curvature inflation benefits training by making curvature-sensitive directions more conservative, while excessive inflation can over-regularize the update.

\begin{table}[H]
\centering
\caption{Effect of K-FAC Inflation Factor.}
\vspace{-0.9em}
\label{tab:kfac-trs}
\begin{tabular}{lcccccc}
\toprule
$\tau$ & 0.75 & 1.0 & 1.25 & 1.5 & 1.75 & 2.0  \\
\midrule
Val. loss & 3.3193 & 3.3173 & 3.3166 & 3.3159 & 3.3168 & 3.3175  \\
\bottomrule
\end{tabular}
\end{table}

We also provide the effect of the FW step under zero start. As shown in Table \ref{tab:FW-step-zs}, for the same number of FW steps, zero start consistently underperforms warm start. With 5 FW steps, zero start still performs worse than warm start with 3 steps. 

\begin{table}[H]
\centering
\caption{Effect of FW Steps (Zero Start).}
\vspace{-0.9em}
\label{tab:FW-step-zs}
\begin{tabular}{cccccc}
\toprule
FW step   & 1 & 2 & 3 & 4 & 5 \\
\midrule
Val. loss  & 3.3258 & 3.3200 & 3.3183 & 3.3192 & 3.3177 \\
\bottomrule
\end{tabular}
    
\end{table}

\clearpage
\section{K-FAC Factor Estimation Details}
\label{app:kfac_factor_estimation}

We estimate the K-FAC factors in Eqs.~\ref{eq:minibatch_kfac_factors}--\ref{eq:kfac_factor_ema} from a small subset of token positions in the current training minibatch. The same sampled positions are used for the activation and output-gradient statistics so that each pair $(a_{i,\ell},\delta_{i,\ell})$ corresponds to the same token.

\subsection{Token Sampling}

Factor statistics are sampled at the token level. For each microbatch containing $N$ token positions, we uniformly sample without replacement
\begin{equation}
n_{\mathrm{keep}}
=
\max
\left\{
1,\,
\left\lceil r_F N \right\rceil
\right\}
\end{equation}
positions, where $r_F$ is the factor-sampling ratio.

In our default setting $r_F=0.01$, so each factor estimate uses approximately $1\%$ of the $524288$ tokens in an optimizer step ($5248$ sampled positions). The per-microbatch counts depend on the per-device batch size and the number of devices; under data-parallel training the per-device sufficient statistics $\sum a a^\top$, $\sum \delta\delta^\top$ and the sample counts are summed across devices before normalization, so the pooled estimate is unaffected by the device count.

The factors are averaged over the number of sampled token positions. The same sampling indices are used for $A_{t,\ell}$ and $B_{t,\ell}$ to preserve the correspondence between activations and their output gradients.

\subsection{Output-Gradient Rescaling}

The output gradients used to construct $B_{t,\ell}$ are obtained from backward hooks. Their raw values contain the scaling introduced by mean-loss reduction, gradient accumulation, AMP loss scaling, and any additional loss scaling used during training. We restore the corresponding per-token scale before computing the K-FAC statistics.

Let $\delta^{\mathrm{hook}}_{i,\ell}$ denote the output gradient captured by the backward hook. The gradient used in the factor estimate is
\begin{equation}
\delta_{i,\ell}
=
\frac{
N\,n_{\mathrm{accum}}
}{
s_{\mathrm{AMP}}\,s_{\mathrm{custom}}
}
\,
\delta^{\mathrm{hook}}_{i,\ell},
\label{eq:kfac_delta_rescaling}
\end{equation}
where $N$ is the number of token positions in the original microbatch, $n_{\mathrm{accum}}$ is the number of accumulated microbatches, $s_{\mathrm{AMP}}$ is the AMP loss-scaling factor, and $s_{\mathrm{custom}}$ denotes any additional multiplicative loss scaling. We set $s_{\mathrm{custom}}=1$ when no additional scaling is used.

Importantly, $N$ in Eq.~\ref{eq:kfac_delta_rescaling} is the token count before K-FAC subsampling. Subsampling changes only the number of terms used to estimate the factors and does not change the scale of the underlying per-token gradients.

The output-side factor is then computed as
\begin{equation}
\widehat B_{t,\ell}
=
\frac{1}{|\mathcal S_t|}
\sum_{i\in\mathcal S_t}
\delta_{i,\ell}\delta_{i,\ell}^{\top},
\end{equation}
using the same sampled token positions as the corresponding activation-side factor.

\subsection{Factor Refresh and EMA}

The K-FAC factors are refreshed every $j=32$ optimizer steps. At a refresh step, the current estimates are incorporated using
\begin{equation}
A_{t,\ell}
=
\beta_F A_{t-j,\ell}
+
(1-\beta_F)\widehat A_{t,\ell},
\qquad
B_{t,\ell}
=
\beta_F B_{t-j,\ell}
+
(1-\beta_F)\widehat B_{t,\ell},
\end{equation}
with $\beta_F=0.9$.

At the first refresh, the running factors are initialized directly from the first estimates,
\begin{equation}
A_{t,\ell}
=
\widehat A_{t,\ell},
\qquad
B_{t,\ell}
=
\widehat B_{t,\ell}.
\label{eq:kfac_factor_initialization}
\end{equation}
Since the EMA is not initialized from zero, no bias correction is used. Between refresh steps, the most recently stored factors are reused.

\subsection{Numerical Precision}

The sampled activations and output gradients follow the training precision and are stored in bfloat16. The matrix products used to form the K-FAC statistics are therefore computed from bfloat16 operands using the corresponding tensor-core GEMM path. Accumulation across microbatches is performed in FP32, and the resulting K-FAC factors and their EMA states are stored in FP32.

\clearpage
\section{Directional Curvature Calibration}
\label{app:curvature_calibration}

K-FAC can have a scale mismatch with the corresponding generalized Gauss--Newton (GGN) curvature. We correct this mismatch with a per-layer scalar coefficient by matching the two directional curvatures along an accepted update direction.

Let $\bar D_{t,\ell}$ denote the stored update direction used for calibration for layer $\ell$, and let $\theta_t$ be the parameter point associated with this direction. Both directional curvatures below are evaluated at the same parameter point $\theta_t$.

\subsection{Directional Curvature Matching}

The K-FAC directional curvature along $\bar D_{t,\ell}$ is
\begin{align}
c^{\mathrm{KFAC}}_{t,\ell}
&=
\left\langle
\bar D_{t,\ell},
B_{t,\ell}\bar D_{t,\ell}A_{t,\ell}
\right\rangle
\nonumber\\
&=
\operatorname{tr}
\left(
\bar D_{t,\ell}^{\top}
B_{t,\ell}
\bar D_{t,\ell}
A_{t,\ell}
\right).
\label{eq:kfac_directional_curvature}
\end{align}
Damping is excluded from this calculation. Thus, $\alpha_{t,\ell}$ calibrates only the Kronecker-factored term $B_{t,\ell}DA_{t,\ell}$, while the damping term $dD$ and the inflation factor $\tau$ are applied separately.

For the GGN curvature, let $z_i(\theta)$ denote the logits for token $i$, and let $J_{i,t,\ell}$ be the Jacobian of $z_i$ with respect to the parameters of layer $\ell$ at $\theta_t$. For softmax cross-entropy,
\begin{equation}
H_{i,t}
=
\operatorname{diag}(p_{i,t})
-
p_{i,t}p_{i,t}^{\top},
\qquad
p_{i,t}
=
\operatorname{softmax}
\left(
z_i(\theta_t)
\right),
\label{eq:logit_hessian}
\end{equation}
is the Hessian with respect to the logits. Define
\begin{equation}
v_{i,t,\ell}
=
J_{i,t,\ell}
\operatorname{vec}
\left(
\bar D_{t,\ell}
\right).
\label{eq:calibration_logit_tangent}
\end{equation}
The corresponding GGN directional curvature is
\begin{equation}
c^{\mathrm{GGN}}_{t,\ell}
=
\frac{1}{N_t^{\mathrm{cal}}}
\sum_{i\in\mathcal T_t^{\mathrm{cal}}}
v_{i,t,\ell}^{\top}
H_{i,t}
v_{i,t,\ell},
\label{eq:ggn_directional_curvature}
\end{equation}
where $\mathcal T_t^{\mathrm{cal}}$ contains all tokens from the sequences selected for calibration and
\begin{equation}
N_t^{\mathrm{cal}}
=
|\mathcal T_t^{\mathrm{cal}}|.
\end{equation}
The GGN estimate and the K-FAC statistics use the same per-token averaging convention.

The instantaneous calibration coefficient is
\begin{equation}
\widehat\alpha_{t,\ell}
=
\frac{
c^{\mathrm{GGN}}_{t,\ell}
}{
c^{\mathrm{KFAC}}_{t,\ell}
}.
\label{eq:instantaneous_curvature_calibration}
\end{equation}
Hence, before clipping and temporal smoothing, $\widehat\alpha_{t,\ell}$ matches the GGN curvature along $\bar D_{t,\ell}$ without changing the Kronecker structure of $A_{t,\ell}$ and $B_{t,\ell}$.

Layers with
\begin{equation}
c^{\mathrm{KFAC}}_{t,\ell}
\leq
\epsilon_{\mathrm{skip}}
\end{equation}
are skipped for that calibration step and retain their previous calibration coefficient, where
\begin{equation}
\epsilon_{\mathrm{skip}}
=
10^{-20}.
\end{equation}

The instantaneous ratio is clipped before the EMA update:
\begin{equation}
\alpha_{t,\ell}
=
\beta_{\alpha}\alpha_{t-k,\ell}
+
(1-\beta_{\alpha})
\operatorname{clip}
\left(
\widehat\alpha_{t,\ell},
\alpha_{\min},
\alpha_{\max}
\right).
\label{eq:app_calibration_ema}
\end{equation}
Between calibration steps, the most recent $\alpha_{t,\ell}$ is reused.

\subsection{Finite-Difference GGN Estimate}

We do not form the GGN matrix or the Jacobian explicitly. Only the Jacobian--vector product in Eq.~\ref{eq:calibration_logit_tangent} is needed, and we approximate it using a forward finite difference.

We first normalize the calibration direction,
\begin{equation}
\widetilde D_{t,\ell}
=
\frac{
\bar D_{t,\ell}
}{
\|\bar D_{t,\ell}\|_F
}.
\label{eq:normalized_calibration_direction}
\end{equation}
Let $\mathcal E_\ell[D]$ denote a perturbation that changes only layer $\ell$ by $D$. Then
\begin{align}
v_{i,t,\ell}
&=
J_{i,t,\ell}
\operatorname{vec}
\left(
\bar D_{t,\ell}
\right)
\nonumber\\
&\approx
\frac{
\|\bar D_{t,\ell}\|_F
}{
\delta
}
\left[
z_i
\left(
\theta_t
+
\delta\,
\mathcal E_\ell[
\widetilde D_{t,\ell}]
\right)
-
z_i(\theta_t)
\right].
\label{eq:finite_difference_jvp}
\end{align}

Substituting Eq.~\ref{eq:finite_difference_jvp} into Eq.~\ref{eq:ggn_directional_curvature} gives the GGN directional curvature using only forward evaluations and the closed-form logit-space Hessian in Eq.~\ref{eq:logit_hessian}. No explicit Jacobian, GGN matrix, or additional backward pass is required.

The direction and curvature are evaluated at the same parameter point. In the implementation, the parameters are restored to the $\theta_t$ associated with $\bar D_{t,\ell}$ before computing the unperturbed and perturbed logits. The unperturbed logits $z_i(\theta_t)$ are computed once and shared across the layerwise calibration calculations.

\subsection{Implementation Details}

We perform curvature calibration every
\begin{equation}
k=192
\end{equation}
training iterations and use
\begin{equation}
\beta_{\alpha}=0.5,
\qquad
\alpha_{\min}=0.05,
\qquad
\alpha_{\max}=100.
\end{equation}

Calibration data are sampled at the sequence level rather than at the token level. We nominally sample $1\%$ of the sequences in the current minibatch. A minimum calibration block size of $4$ sequences is used, and the selected sequences are processed in blocks of $4$. With the minibatch size of $512$ used in our experiments, the nominal $1\%$ setting results in $8$ sequences per calibration step, corresponding to $1.56\%$ of the minibatch. All tokens in each selected sequence are included in the directional-curvature estimate.

For the finite-difference approximation, we use
\begin{equation}
\delta=0.1
\end{equation} 
after normalizing the perturbation by $\|\bar D_{t,\ell}\|_F$. The finite-difference computation is carried out in FP32 with TF32 disabled. This is important because Eq.~\ref{eq:finite_difference_jvp} subtracts two nearby logit vectors and is sensitive to reduced mantissa precision.

\clearpage
\section{Experimental Details}
\label{app:experimental_details}

Table~\ref{tab:training_hyperparameters} summarizes the training and QSD-specific hyperparameters used in our main experiments. Unless otherwise stated, QSD-specific hyperparameters tuned on GPT-124M are kept unchanged for GPT-350M.

\begin{table}[H]
\centering
\caption{Hyperparameters used in the main experiments.}
\label{tab:training_hyperparameters}
\small
\begin{tabular}{lcc}
\toprule
Hyperparameter & GPT-124M & GPT-350M \\
\midrule

\multicolumn{3}{l}{\textit{Training setup}} \\

Dataset
& FineWeb
& FineWeb \\

Sequence length
& 1024
& 1024 \\

Batch size (sequences)
& 512
& 512 \\

Training tokens
& 2.5B
& 7B \\

Precision
& bfloat16
& bfloat16 \\

Number of GPUs
& 4
& 4 \\

GPU
& RTX 5000 Ada
& RTX 5000 Ada \\

Learning-rate schedule
& WSD
& WSD \\

Warmup steps / ratio
& 0
& 0 \\

Weight decay
& 0
& 0 \\

\midrule
\multicolumn{3}{l}{\textit{QSD optimizer}} \\

QSD momentum
& 0.95
& 0.95 \\

QSD learning rate
& 0.02
& 0.015 \\

Spectral radius $\rho$
& 1
& 1 \\

Number of FW steps $K$
& 3
& 3 \\

K-FAC damping $d$
& $10^{-6}$
& $10^{-6}$ \\

Curvature inflation $\tau$
& 1.5
& 1.5 \\

\midrule
\multicolumn{3}{l}{\textit{Muon optimizer}} \\

Muon momentum
& 0.95
& 0.95 \\

Muon learning rate
& 0.01
& 0.01 \\

Spectral radius $\rho$
& 1
& 1 \\

\midrule
\multicolumn{3}{l}{\textit{K-FAC factor estimation}} \\

Factor sampling ratio $r_F$
& $1\%$
& $1\%$ \\

Factor refresh interval $j$
& 32 optimizer steps
& 128 optimizer steps \\

Factor EMA $\beta_F$
& 0.9
& 0.8 \\

Factor storage / EMA precision
& FP32
& FP32 \\

\midrule
\multicolumn{3}{l}{\textit{Curvature-scale calibration}} \\

Calibration interval $k$
& 192 optimizer steps
& 384 optimizer steps \\

Calibration EMA $\beta_\alpha$
& 0.5
& 0.5 \\

Calibration clip
& $[0.05,100]$
& $[0.05,100]$ \\

Nominal sequence sampling ratio
& $1\%$
& $1\%$ \\

Finite-difference step $\delta$
& 0.1
& 0.1 \\

Finite-difference precision
& FP32 (TF32 disabled)
& FP32 (TF32 disabled) \\

\midrule
\multicolumn{3}{l}{\textit{AdamW parameters}} \\

AdamW learning rate
& 0.0056
& 0.0032 \\

AdamW $\beta_1$
& 0.9
& 0.9 \\

AdamW $\beta_2$
& 0.95
& 0.95 \\

\bottomrule
\end{tabular}
\end{table}

\clearpage
\section{Limitations}
\label{sec:limitations}

QSD relies on a local quadratic model and therefore depends on the quality of the curvature approximation. Our implementation uses Kronecker-factored empirical-Fisher statistics with GGN-based scale calibration, which may still miss aspects of the true curvature.

Our theoretical guarantees apply to the quadratic surrogate rather than directly to the exact training loss, since higher-order effects are not modeled.

Finally, our experiments focus on GPT pre-training at the 124M and 350M scales. Evaluating QSD on larger models, other architectures, and post-training settings remains future work.

\end{document}